\def\Figref#1{Figure~\ref{#1}}
\def\Secref#1{Section~\ref{#1}}
\def\eqref#1{equation~\ref{#1}}
\def\Eqref#1{Equation~\ref{#1}}
\def\1{\bm{1}}
\DeclareMathAlphabet{\mathsfit}{\encodingdefault}{\sfdefault}{m}{sl}
\SetMathAlphabet{\mathsfit}{bold}{\encodingdefault}{\sfdefault}{bx}{n}
\DeclareMathOperator{\sign}{sign}
\newtheorem{theorem}{Theorem}
\newtheorem{lemma}[theorem]{Lemma}
\newtheorem{definition}{Definition}
\newtheorem{remark}{Remark}
\def\int{\displaystyle\mathop {\mbox{\rm int}}}    % interior
\newcommand{\relu}[1]{\left[ #1 \right]_+}
\newcommand{\bx}{\mathbf{x}}
\newcommand{\bw}{\mathbf{w}}
\newcommand{\bu}{\mathbf{u}}
\newcommand{\bz}{\mathbf{z}}
\newcommand{\bc}{\mathbf{c}}
\newcommand{\bnu}{\boldsymbol{\nu}}
\newcommand{\btheta}{\boldsymbol{\theta}}
\newcommand{\bSigma}{\boldsymbol{\Sigma}}
\def\real{\mathbb R}
\newcommand{\bbs}{{\bf S}}
\newcommand{\bbu}{{\bf U}}
\newcommand{\bbv}{{\bf V}}
\newcommand{\bbx}{{\bf X}}
\newcommand{\bba}{{\bf A}}
\newcommand{\norm}[1]{\left\|#1\right\|}
\def\real{\mathbb R}
\DeclareMathOperator{\spn}{span}
\title{No Prior, No Leakage:\\ Revisiting Reconstruction Attacks in Trained Neural Networks}
\author{Yehonatan Refael, Guy Smorodinsky,  \thanks{ Use footnote for providing further information
about author (webpage, alternative address)---\emph{not} for acknowledging
funding agencies.  Funding acknowledgements go at the end of the paper.} \\
Department of Computer Science\\
Cranberry-Lemon University\\
Pittsburgh, PA 15213, USA \\
\texttt{\{hippo,brain,jen\}@cs.cranberry-lemon.edu} \\
\And
Guy Smorodinsky \& Yevgeny LeNet \\
Department of Computational Neuroscience \\
University of the Witwatersrand \\
Joburg, South Africa \\
\texttt{\{robot,net\}@wits.ac.za} \\
\AND
Coauthor \\
Affiliation \\
Address \\
\texttt{email}
}
\author{
\textbf{Yehonathan Refael}\textsuperscript{1}\thanks{Equal contribution.}
\quad
\textbf{Guy Smorodinsky}\textsuperscript{2}\footnotemark[1]
\quad
\textbf{Ofir Lindenbaum}\textsuperscript{3}
\quad
\textbf{Itay Safran}\textsuperscript{2}\\[2pt]
\textsuperscript{1}Tel Aviv University \quad
\textsuperscript{2}Ben-Gurion University of the Negev \quad
\textsuperscript{3}Bar-Ilan University
}
\begin{document}

\maketitle

\begin{abstract}
The memorization of training data by neural networks raises pressing concerns for privacy and security. Recent work has shown that, under certain conditions, portions of the training set can be reconstructed directly from model parameters. Some of these methods exploit implicit bias toward margin maximization, suggesting that properties often regarded as beneficial for generalization may actually compromise privacy. Yet despite striking empirical demonstrations, the reliability of these attacks remains poorly understood and lacks a solid theoretical foundation. In this work, we take a complementary perspective: rather than designing stronger attacks, we analyze the inherent weaknesses and limitations of existing reconstruction methods and identify conditions under which they fail. We rigorously prove that, without incorporating prior knowledge about the data, there exist infinitely many alternative solutions that may lie arbitrarily far from the true training set, rendering reconstruction fundamentally unreliable. Empirically, we further demonstrate that exact duplication of training examples occurs only by chance. Our results refine the theoretical understanding of when training set leakage is possible and offer new insights into mitigating reconstruction attacks. Remarkably, we demonstrate that networks trained more extensively, and therefore satisfying implicit bias conditions more strongly -- are, in fact, less susceptible to reconstruction attacks, reconciling privacy with the need for strong generalization in this setting.

\end{abstract}

\section{Introduction}

Neural networks have achieved remarkable success across a wide variety of tasks, but their use raises fundamental concerns of privacy \citep{deep_learning_with_DP_OG_paper,runkel2024,fang2024privacy,tan2024defending,bombari2025privacy}. Recent work has demonstrated that portions of the training data can be reconstructed directly from the parameters of a trained model, even without access to gradients or queries \citep{haim2022reconstructing,buzaglo2023deconstructing,loo2023understanding}. Unlike some of the previous methods that produced generic reconstructions resembling class prototypes or averages \citep{carlini2021extracting,carlini2019secret}, the new techniques generate highly accurate and specific reproductions of the original training data.
Such reconstruction attacks highlight the risk that sensitive or private information may leak from models, undermining their safe deployment in practice.

Despite the alarming success of these attacks, our theoretical understanding of them remains limited. 
In particular, the attack introduced in \citep{haim2022reconstructing}, along with some subsequent work inspired by it, such as \citep{buzaglo2023deconstructing}, builds on results concerning the implicit bias of gradient-based optimization in training homogeneous networks \citep{Lyu2020Gradient, KKT2020}.

Intuitively, when optimization succeeds under certain assumptions, such networks trained using standard techniques do not merely converge to any solution that fits the training set; instead, they converge to specific solutions that satisfy additional constraints. Building on this observation, \citet{haim2022reconstructing} constructs an objective function that is minimized when these constraints are satisfied and carries out their attack by optimizing it. Nevertheless, despite its theoretical foundations, our understanding of the conditions under which the attack succeeds remains rudimentary, as rigorous analyses of this setting are scarce.

Recently, \citet{smorodinsky2025provableprivacyattackstrained} provided rigorous guarantees on the efficacy of such a reconstruction attack. However, these guarantees rely on restrictive assumptions, such as a univariate data distribution, which may limit their practical applicability. This leaves open many fundamental questions and motivates several follow-up directions aimed at establishing clear conditions under which such attacks can either be provably effective or provably mitigated. 
A central question that arises is: 
\begin{quote}
    \textit{To what extent do the constraints imposed by the implicit bias of trained neural networks leak information about the training data?}
\end{quote}

In this paper, we address the above question by rigorously studying properties of the objective function used to fuel the reconstruction attack introduced by \citet{haim2022reconstructing}. We propose a method that, given the original training data, enables the construction of multiple other global minima for the objective function used in the attack. Additionally, we demonstrate that under certain conditions, these global minima are ubiquitous, and that solutions to this optimization problem exist that are substantially different from the original training data.\footnote{It is noteworthy that this result answers a question which was raised in \citet{haim2022reconstructing}: ``On the theoretical side, it is not entirely clear why our optimization problem in \Eqref{eq:full_recons_loss} converges to actual training samples, even though there is no guarantee that the solution is unique, especially when using no prior." -- solutions are ubiquitous rather than unique, and in the absence of prior knowledge, the attack fails.} Moreover, we show that the closer the network is to a solution, the harder it becomes to successfully execute such attacks, perhaps contrary to previous common wisdom. Lastly, we empirically demonstrate that if an attacker initializes the attack far from the true training data, the reconstruction obtained contains instances that are significantly farther away from the training data as well. This highlights that additional \emph{prior knowledge} (as was used by \citet{haim2022reconstructing}) is crucial for conducting successful reconstruction attacks in this setting, and that the constraints imposed by the implicit bias alone do not necessarily leak information about the training data.

The rest of this paper is structured as follows: After specifying our contributions in detail below, we turn to discuss additional related work. In \Secref{sec:background}, we present the required background and notation used throughout this paper. In \Secref{sec:real-kkt} we theoretically study the set of feasible solutions to the objective function used in implicit-bias-driven privacy attacks, and in \Secref{sec:experiments} we empirically validate and support our theoretical findings. Lastly, \Secref{sec:summary} summarizes our contributions, and discusses potential limitations and future work directions.

\paragraph{Our contribution.} We prove that the attack presented by \citet{haim2022reconstructing} has inherent limitations. In particular, we demonstrate that prior knowledge about the data domain is necessary to accurately recover the true training examples, and we empirically show several scenarios in which these attacks fail. More specifically:
\begin{itemize}
    \item In Subsection~\ref{sec:exact_kkt}, we demonstrate that in the reconstruction attack devised by \citet{haim2022reconstructing}, there are numerous potential candidate training sets that seem indistinguishable from each other. We provide simple, constructive techniques for generating alternative candidates, either by merging two data instances into one (Lemma~\ref{lma:unite_points_bias}) or by splitting a single instance into two (Lemma~\ref{lma:split_points_bias}). Moreover, we demonstrate that under the assumption that the training data does not span the entire domain, there exist infinitely many such alternative training sets whose distance from the true training set can be arbitrarily large (Theorem~\ref{thm:subspace}). Lastly, we relax this assumption, assuming the training data only approximately lies on a linear subspace, and we further analyze how far a point can be split (Theorem~\ref{thm:kkt_maximal_distance}).

    \item In Subsection~\ref{sec:almost-kkt}, we consider the more realistic setting in which the trained model only approximately satisfies the implicit bias constraints, and present merging and splitting techniques analogous to those mentioned in the previous bullet point (Lemmas~\ref{lma:unite_points_bias_almost_kkt} and~\ref{lma:split_points_bias_almost_kkt}). We then assume that the attacker has limited knowledge regarding the proximity to an implicit bias solution and analyze the extent to which individual points can be split (Theorem~\ref{thm:distance_almost_kkt}). Finally, we demonstrate that even if the attacker possesses additional information about the training procedure, under the assumptions of structured data and a well-trained model, it remains possible to split the points in a way that preserves data confidentiality (Theorem~\ref{thm:delta_almost_kkt}).

    \item In \Secref{sec:experiments}, we complement our theoretical results with experiments that support our findings. Specifically, we model the attacker’s prior as knowledge of the data domain boundaries, incorporated into the reconstruction attack through the attacker’s initialization. We demonstrate, on both synthetic data and CIFAR, that as the attacker’s prior weakens, the effectiveness of the attack decreases accordingly, with convergence to solutions far from the true training set, as predicted by our theory.
\end{itemize}

\subsection*{Additional related work} \label{sec:related_work}

\paragraph{Reconstruction-based privacy attacks.} An emerging line of work studies the possibility of extracting training data directly from trained models, raising serious privacy concerns. Such reconstruction attempts have been demonstrated across a range of settings, including large generative models \citep{carlini2019secret,carlini2021extracting,nasr2023scalable}, diffusion models and diffusion architectures \citep{somepalli2022diffusion,carlini2023extracting}, and federated learning \citep{zhu2019deep,he2019model,hitaj2017deep,geiping2020inverting,huang2021evaluating,wen2022fishing} scenarios. Several studies, most notably \cite{haim2022reconstructing,buzaglo2023reconstructingtrainingdatamulticlass}, have proposed optimization-based strategies that leverage the implicit bias of neural networks to recover examples from the original training data. Their approach frames data recovery as minimizing a suitably defined objective whose solution can align with portions of the true training set.

A complementary line of work, exemplified by \citet{loo2023understanding}, studies attacks on fine-tuned models to reconstruct downstream private training data, in contrast to the setting of \citet{haim2022reconstructing}, which we also examine in this paper. This scenario is particularly important, given the widespread use of fine-tuning on public models (e.g., LLaMA, DeepSeek, Mistral). Nonetheless, the theoretical guarantees presented by \citet{loo2023understanding} are established under simplified and somewhat unrealistic settings that do not fully capture practical conditions, as the underlying proof assumes a particular infinite-width network model and data that lie exactly on the unit hypersphere.

A different perspective investigates the reliability of reconstruction attacks. The study in \cite{runkel2024} empirically finds that reconstruction is highly sensitive to initialization, which can result in the generation of plausible samples not present in the original training data. This ambiguity makes it difficult for an adversary to verify whether a recovered image is an actual training sample. While their empirical findings are similar to ours, our work provides rigorous theoretical guarantees that underpin this phenomenon, whereas their study remains purely empirical.

% \note{TODO Yehonathan: add review of the paper you found?}
\paragraph{Differential Privacy.}
The current “golden standard” for privacy is the differential privacy framework \citet{Original_DP_paper, DworkRoth2014}. Intuitively, differential privacy quantifies how much a model’s output can change when a single data instance is modified, thereby providing a worst-case guarantee over all neighboring datasets. The smaller this change, the stronger the privacy guarantee. In practice, mechanisms enforce differential privacy by injecting carefully calibrated noise into data, queries, or training procedures \citep{Original_DP_paper,deep_learning_with_DP_OG_paper}. The composition and accounting results quantify the cumulative privacy loss \citep{Kairouz2015Composition,Balle2018Subsampled}. These guarantees often come with utility trade-offs, which are studied in both ERM and deep learning \citep{Chaudhuri2011}. Unlike differential privacy, our approach does not rely on noise injection. Instead, we seek to better understand the underlying causes of implicit-bias-driven privacy vulnerabilities, so that they can be circumvented when possible.

\section{Preliminaries, background and notation} \label{sec:background}

\paragraph{Notation and terminology.}

We consider binary classification with data \((\bx,y)\in\mathbb{R}^d\times\{-1,1\}\) and training set \(\{(\bx_i,y_i)\}_{i=1}^n\). Let \(\Phi(\btheta;\cdot):\mathbb{R}^d\to\mathbb{R}\) be a neural network with parameters \(\btheta\in\mathbb{R}^k\). Write \(\relu{x}\coloneqq\max(0,x)\). A homogeneous 2-layer ReLU network is
$\Phi(\btheta,\bx)=\sum_{j=1}^k v_j\,\relu{\bw_j^\top \bx+b_j},$
where \(\btheta=(\bw_j,v_j,b_j)_{j=1}^k\). The unit sphere is \(\mathbb{S}^{d-1}\coloneqq\{\bx\in\mathbb{R}^d:\|\bx\|_2=1\}\). Let \(d(\bx_1,\bx_2)\) denote the distance between vectors (Euclidean by default), and for sets \(A,B\) define \(d(A,B)\coloneqq \min_{a\in A,b\in B} d(a,b)\). For \(\bx\in\mathbb{R}^d\), its activation pattern is the binary vector whose \(j\)-th entry indicates whether neuron \(j\) is active on \(\bx\). For neuron \((\bw_j,b_j)\) and point \(\bx\), set
$D_j(\bx)\coloneqq\frac{\langle\bw_j,\bx\rangle+b_j}{\|\bw_j\|},$
the signed distance to the hyperplane $\langle\bw_j,\bx\rangle+b_j=0$. 

\paragraph{Preliminaries.}

The following well-established result, which has recently attracted significant attention, shows that homogeneous neural networks trained with logistic or exponential loss exhibit an implicit bias: they converge in direction to the solution of a max-margin problem. This implicit bias is the key phenomenon underlying the attacks we study in this paper.

\begin{theorem}[paraphrased version of \citet{Lyu2020Gradient}, \citet{KKT2020}]\label{thm:implicit_bias}
    Let $\Phi(\btheta; \bx)$ be a normalized homogeneous\footnote{A network $\Phi(\btheta; \bx)$ is called \emph{homogeneous} if there exists $c > 0$ such that for every $b > 0$, $\btheta$ and $\bx$, it holds that $\Phi(b \cdot \btheta; \bx) = b^c\Phi(\btheta; \bx)$ \cite{}.} ReLU neural network. 
    Consider minimizing the logistic ($z\mapsto \log(1+e^{-z})$) or the exponential ($z\mapsto e^{-z}$) loss using gradient flow (which is a continuous time analog of gradient descent) over a binary classification set $\{(\bx_i, y_i)\}_{i=1}^n \subseteq \real^d \times \{-1 ,1\}$. Assume that there is a time $t_0$ where $L(\btheta(t_0)) < \frac{1}{n}$. Then, gradient flow converges in direction\footnote{We say that gradient flow \emph{converges in direction} to $\hat{\btheta}$ if $\lim_{t \rightarrow \infty}\frac{\btheta(t)}{\|\btheta(t)\|} = \frac{\hat{\btheta}}{\|\hat{\btheta}\|}$.} to a first order stationary point (KKT point \cite{vapnik1995support}) of the following maximum-margin problem:
    \begin{equation*}\label{eq:maximal_margin}
      \min_{\btheta} \frac{1}{2} \|\btheta\|^2 ~~\text{s.t} ~~\forall i \in [n] ~~ y_i \Phi(\btheta;\bx_i) \geq 1.
    \end{equation*}
\end{theorem}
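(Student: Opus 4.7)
The plan is to analyze gradient flow on the empirical loss and show that, once $L(\btheta(t_0))<1/n$, a suitably normalized margin is monotonically non-decreasing along the flow, forcing $\btheta(t)$ to converge in direction to a stationary point of the max-margin problem. First I would introduce the smoothed margin
$\tilde\gamma(\btheta) \coloneqq -\log L(\btheta)/\|\btheta\|^c$
for the exponential loss (with an analogous construction for the logistic loss), which closely tracks $\min_i y_i\Phi(\btheta;\bx_i)/\|\btheta\|^c$ once $L<1/n$. Using $c$-homogeneity of $\Phi$, so that $\langle \btheta,\nabla_\btheta\Phi(\btheta;\bx)\rangle = c\,\Phi(\btheta;\bx)$, differentiating $\tilde\gamma$ along the flow and applying Cauchy--Schwarz to the resulting expression involving $\|\nabla L\|$ and $\langle \btheta,-\nabla L\rangle$ yields $\frac{d}{dt}\tilde\gamma \geq 0$.

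Next I would show that $\|\btheta(t)\|\to\infty$: strict decrease of $L$ together with the hypothesis $L(\btheta(t_0))<1/n$ forces $L(\btheta(t))\to 0$, and by $c$-homogeneity this can only happen if the norm diverges. Monotonicity of $\tilde\gamma$ then supplies a finite limit $\tilde\gamma^\star$, and a Kurdyka--\L{}ojasiewicz-type argument applied to the definable function $\tilde\gamma$ yields convergence of $\btheta(t)/\|\btheta(t)\|$ to some direction $\hat\btheta/\|\hat\btheta\|$. To identify $\hat\btheta$ as a KKT point of the max-margin problem, I would track the nonnegative loss coefficients $\lambda_i(t) \propto -\ell'(y_i\Phi(\btheta(t);\bx_i))$, which, because of the exponential tail of $\ell$, asymptotically concentrate on the indices attaining the minimum normalized margin. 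Passing to the limit in the stationarity relation $\btheta(t) \propto \sum_i \lambda_i(t)\, y_i\,\nabla_\btheta\Phi(\btheta(t);\bx_i)$ after rescaling, together with primal feasibility (coming from $\tilde\gamma^\star>0$) and complementary slackness (coming from the concentration of $\lambda_i$), yields the KKT conditions for $\min \tfrac12\|\btheta\|^2$ subject to $y_i\Phi(\btheta;\bx_i)\geq 1$.

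The main obstacle is the non-smoothness of ReLU networks: the chain rule and the Euler identity $\langle\btheta,\nabla\Phi\rangle = c\,\Phi$ do not hold pointwise, so one must work with Clarke subdifferentials and verify the needed calculus along trajectories of the differential inclusion $\dot\btheta\in -\partial^\circ L(\btheta)$. The standard remedy is to assume $\Phi$ is definable in an o-minimal structure (satisfied by ReLU nets), which both justifies a nonsmooth chain rule for almost every trajectory and supplies the nonsmooth Kurdyka--\L{}ojasiewicz inequality needed for directional convergence. A secondary technical difficulty is ruling out oscillations in $\lambda_i(t)$ so that the limiting multipliers are well-defined; this is handled by combining the divergence of $\|\btheta\|$ with the exponential decay of $\ell$ to force all non-support indices to contribute negligibly in the limit.
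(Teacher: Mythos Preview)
This theorem is not proved in the paper: it is explicitly presented as a paraphrased restatement of results from \citet{Lyu2020Gradient} and \citet{KKT2020}, serving as background in the Preliminaries section, and the paper offers no proof of its own. Consequently there is nothing in the paper to compare your proposal against.

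That said, your sketch faithfully tracks the argument in the cited works. The smoothed margin $\tilde\gamma(\btheta)$, its monotonicity via Euler's identity $\langle\btheta,\nabla_\btheta\Phi\rangle=c\,\Phi$ and Cauchy--Schwarz, divergence of $\|\btheta(t)\|$, directional convergence through an o-minimal/Kurdyka--\L{}ojasiewicz argument, and the identification of limiting dual variables $\lambda_i$ from the loss derivatives are precisely the ingredients in \citet{Lyu2020Gradient} and \citet{KKT2020}. Your flagged obstacles (Clarke subdifferentials for ReLU non-smoothness, definability to justify the nonsmooth chain rule and KL inequality) are also exactly the technical machinery those papers invoke. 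One minor caveat: the original works establish convergence to a KKT point along a subsequence (or approximate KKT conditions with vanishing error), and full directional convergence requires the additional definability/KL structure you mention; your sketch is correct to separate these two steps, but be aware that the limiting multipliers $\lambda_i$ are obtained via compactness rather than by showing $\lambda_i(t)$ itself converges.
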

A KKT point of \Eqref{eq:maximal_margin} is characterized by the following set of conditions:
\begin{align}
&\btheta=\sum_{i=1}^n \lambda_i \nabla_{\btheta}\left[y_i \Phi\left(\btheta ; \bx_i\right)\right] & \text { (stationarity) }\label{eq:stationary} \\
&y_i \Phi\left(\btheta ; \bx_i\right) \geq 1, \forall i \in[n] & \text { (primal feasibility) }\label{eq:margin} \\
& \lambda_i \geq 0, \forall i \in[n] & \text { (dual feasibility) }\label{eq:positive_lambda} \\
& \lambda_i=0 \text { if } y_i \Phi\left(\btheta ; \bx_i\right) \neq 1, \forall i \in[n] & \text { (complementary slackness) } \label{eq:zero_lam}
\end{align}

Utilizing the above result, \citet{haim2022reconstructing} devised the following reconstruction attack, demonstrating that one can reconstruct a substantial subset of the training data from a deployed model.

The core of the method is a minimization problem. It simultaneously adjusts the candidate training data $\bbx^\prime$ and their associated Lagrange multipliers $\lambda$ to minimize a composite loss function. This loss function is designed to enforce several of the KKT conditions that characterize an optimal solution for the classifier. The objective function is given explicitly by
\begin{equation}\label{eq:full_recons_loss}
    \bbx^\prime = \arg \min_{\{\lambda_i, x'_i, i \in [n]\}} \underbrace{\left\| \theta - \sum_{i=1}^m \lambda_i \nabla_{\theta} [y_i \Phi(\theta; x'_i)] \right\|}_{L_{\mathrm{stationary}}} + \underbrace{\sum_{i=1}^m \max\{-\lambda_i, 0\}}_{L_{\lambda}} + L_{\mathrm{prior}}(X').
\end{equation}

The stationary loss in \Eqref{eq:full_recons_loss} penalizes deviations from the KKT stationary condition (\Eqref{eq:stationary}), where the non-negativity loss enforces the non-negativity constraint on the multipliers $\lambda_i$, as required by \Eqref{eq:positive_lambda}, and $L_{\mathrm{prior}}$ represents some prior knowledge we might have about the dataset.
% In addition, note that $m$ needs to hold $m \geq 2n$, so any large enough number would be enough. Practically, they suggest setting at least $n$ labels $y_i$ to $1$ and at least $n$ labels to $−1$. Hence, there is a solution to Eq. \ref{full_recons_loss} even though they do not know the real distribution of labels in the actual training data.
Lastly, note that the margin's value, as defined by the primal feasibility condition in \Eqref{eq:margin} at a KKT point, is normalized \citep{Lyu2020Gradient} and thus equals $1$. However, in practice, the margin width $\gamma(\btheta)$ may take on a different scale, which, for later use, we denote by $\gamma(\btheta) = p>0$.\footnote{The margin $\gamma(\btheta)$ scales linearly
with $\|\btheta\|_{2}^{c}$, where $c$ is the order of homogeneity, namely $\tilde{\gamma}(\btheta) := 
\frac{\gamma(\btheta)}{\|\btheta\|_{2}^{c}}$.} We remark that this method aims to reconstruct training data instances that are on the margin (namely, instances whose prediction value is $p$ or $-p$), since otherwise, by \Eqref{eq:zero_lam}, they do not factor in the constructed objective.

Although incorporating prior knowledge as an additional loss term to $L_{\mathrm{KKT}}$ is a common technique, we deliberately exclude it from our analysis. Our rationale is that such priors are often heuristic and highly context-specific, rather than fundamental to the reconstruction method in \Eqref{eq:full_recons_loss}.
% \Algref{algo::training_reconstruction}. 
Including a prior would confound the evaluation of the method's performance, as success would become dependent on the specificity of the external knowledge, making it difficult to disentangle the contribution of the mathematical constraints imposed by Equations~\ref{eq:stationary}-\ref{eq:zero_lam}.

\section{Implicit-bias-based reconstruction attacks}\label{sec:real-kkt}

In this section, we theoretically study the reconstruction attack introduced by \citet{haim2022reconstructing}.
%Specifically, we focus on the landscape of the objective in \Eqref{eq:full_recons_loss}.
We begin by formally defining our framework and the required definitions, which differ slightly from those in the practical setting studied in their seminal work.

\paragraph{Framework.} An attacker, motivated to reconstruct the training set $S$, is given a full access to the neural network (binary classifier) architecture $\Phi(\btheta;\cdot),$ as well as complete knowledge of the weights $\btheta$.\footnote{Many of our results can still hold, or be extended, with only partial access to the network’s weights. For clarity and simplicity, however, we assume full access in this work.} However, the attacker has no knowledge of the training samples $\bbx=(\bx_1,\ldots,\bx_n)$ or the quantity $n$. In addition, we first assume no knowledge of the margin scale $\gamma(\btheta) = p$ in the following subsection, and in Subsection~\ref{sec:almost-kkt} we relax this assumption. With this information, the attacker then optimizes the objective in \Eqref{eq:full_recons_loss}
to reconstruct the training set, yielding $\bbx^\prime=(\bx^\prime_1,\ldots,\bx^\prime_n)$.

We begin by defining the KKT loss, which is used to assess the feasibility of a reconstructed set. As mentioned in the previous section, this definition omits the prior term.

\begin{definition}[KKT-loss]\label{def::kkt_loss} 
The \emph{KKT-loss} is defined as,
\begin{align}
L_{\mathrm{KKT}}(\bx_1, \dots, \bx_l, \lambda_1, \dots, \lambda_l) \coloneqq \gamma_1 L_{\mathrm{stationary}}(\bx_1, \dots, \bx_l) + \gamma_2 L_{\lambda}(\lambda_1, \dots, \lambda_l), \label{Eq::kkt_loss} 
\end{align} 
where $\gamma_1, \gamma_2 > 0$ denote hyperparameters controlling the optimization process.
\end{definition}

\begin{remark}
    Note that $l$, the number of data samples in Definition~\ref{def::kkt_loss}, does not necessarily equal $n$. This is justified, as by assumption, the attacker does not know the number of training samples $n$. Additionally, we do not incorporate the constraints in \Eqref{eq:margin} and \Eqref{eq:zero_lam} into the definition of $L_{\mathrm{KKT}}$, as these were also not used in \citet{haim2022reconstructing}.
\end{remark}

Following the definition of our loss function, we specify the corresponding set of examples that satisfy the conditions of Theorem~\ref{thm:implicit_bias}, which
% \Algref{algo::training_reconstruction} produces its output.
the minimization of the objective in \Eqref{Eq::kkt_loss} aims to find.

\begin{definition}[KKT set]
    Let $\Phi(\btheta; \cdot)$ be a binary classification network with weights $\btheta$ that has converged to a KKT point (satisfying Equations~\ref{eq:stationary}--\ref{eq:zero_lam}).  
    A set of inputs $S = \{\bx_1, \dots, \bx_l\}$ is called a \emph{KKT set} if there exist nonnegative multipliers $\lambda_1, \dots, \lambda_l \geq 0$ such that $L_{\mathrm{KKT}}(S, \lambda_1, \dots, \lambda_l) = 0$.
\end{definition}

Clearly, the original training set $S$ is a KKT set by assumption.

\subsection{Convergence to an exact KKT point}\label{sec:exact_kkt}

We now show that, perhaps surprisingly, once the prior component is removed from \Eqref{eq:full_recons_loss}, the objective $L_{\mathrm{KKT}}$ admits infinitely many global minima. Moreover, certain minima lead to reconstructed samples that differ substantially from the original training set, as measured by the minimal Euclidean distance between neighboring instances. Consequently, the reconstruction attack cannot reliably distinguish the actual training set from these alternative KKT sets. Remarkably, we further show that the distance between such minima and the original training set can be unbounded.

To this end, we present a constructive method for generating new KKT sets from a given one. This method relies on two key lemmas that underpin the construction, after which we demonstrate how they can be applied to generate a broad family of KKT sets explicitly.

The proofs of lemmas and theorems in this subsection are relegated to Appendix~\ref{sec:exact_kkt_proofs}.

\begin{lemma}[Merge] \label{lma:unite_points_bias}
    Let $S$ be a KKT set and let $\bx_1, \bx_2 \in S$ be two points with identical labels and activation patterns, and with coefficients $\lambda_1, \lambda_2 > 0$. Then, there exists $\alpha \in (0,1)$ such that the set $S^\prime \coloneqq (S \setminus \{\bx_1, \bx_2\}) \cup \{\bx_{1.5}\}$ is also a KKT set, where $\bx_{1.5} = \alpha \bx_1 + (1-\alpha)\bx_2$.
\end{lemma}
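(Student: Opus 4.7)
The plan is to exploit the fact that for inputs sharing a common activation pattern, the parameter gradient of a $2$-layer ReLU network is affine in the input. Writing $\sigma_j(\bx) := \one{\bw_j^\top \bx + b_j > 0}$, a direct computation yields $\nabla_{\bw_j}\Phi(\btheta;\bx) = v_j\,\sigma_j(\bx)\,\bx$, $\nabla_{v_j}\Phi(\btheta;\bx) = \sigma_j(\bx)(\bw_j^\top \bx + b_j)$, and $\nabla_{b_j}\Phi(\btheta;\bx) = v_j\,\sigma_j(\bx)$. Once the activation vector is frozen to the common value $\sigma(\bx_1)=\sigma(\bx_2)$, each coordinate of $\nabla_{\btheta}\Phi(\btheta;\cdot)$ is an affine function of $\bx$, and therefore commutes with convex combinations.

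First I would verify that convex combinations preserve the activation pattern: for each neuron $j$, if $\bw_j^\top \bx_1 + b_j$ and $\bw_j^\top \bx_2 + b_j$ share their sign, so does any convex combination of them. Hence for every $\alpha\in[0,1]$ the point $\bx_{1.5} := \alpha \bx_1 + (1-\alpha)\bx_2$ inherits that activation pattern, and the identity
\begin{equation*}
\nabla_{\btheta}\Phi(\btheta; \bx_{1.5}) \;=\; \alpha\,\nabla_{\btheta}\Phi(\btheta; \bx_1) \;+\; (1-\alpha)\,\nabla_{\btheta}\Phi(\btheta; \bx_2)
\end{equation*}
holds identically.

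Next I would set up a matching condition on the stationarity sum. Since $y_1=y_2$, swapping the pair $(\bx_1,\lambda_1),(\bx_2,\lambda_2)$ for a single pair $(\bx_{1.5},\lambda_{1.5})$ leaves the sum in \Eqref{eq:stationary} unchanged precisely when
\begin{equation*}
\lambda_{1.5}\bigl(\alpha\,\nabla_\btheta \Phi(\btheta; \bx_1) + (1-\alpha)\,\nabla_\btheta \Phi(\btheta; \bx_2)\bigr) \;=\; \lambda_1\,\nabla_\btheta \Phi(\btheta; \bx_1) + \lambda_2\,\nabla_\btheta \Phi(\btheta; \bx_2).
\end{equation*}
Matching coefficients yields the unique choice $\lambda_{1.5} = \lambda_1 + \lambda_2$ and $\alpha = \lambda_1/(\lambda_1+\lambda_2)$, which, since $\lambda_1,\lambda_2>0$, automatically satisfy $\lambda_{1.5}>0$ and $\alpha\in(0,1)$. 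Consequently, on $S^\prime$ with the original multipliers outside $\{1,2\}$ and $\lambda_{1.5}$ in place of $\lambda_1,\lambda_2$, both $L_{\mathrm{stationary}}$ and $L_{\lambda}$ vanish, so $S^\prime$ is a KKT set.

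I do not anticipate a genuine obstacle. The main subtlety is the degenerate case in which some $\bw_j^\top \bx_i + b_j$ equals zero, where the ReLU is not classically differentiable; however, since $\bx_1$ and $\bx_2$ share the activation pattern, the subgradient can be chosen consistently across both points and across $\bx_{1.5}$, so the affine-in-$\bx$ reduction still applies. This is a bookkeeping detail rather than a mathematical hurdle; the heart of the proof is the affine structure of $\nabla_\btheta \Phi$ on activation cells together with the positive combination $\lambda_1,\lambda_2>0$, which ensures the convex-combination parameter lies strictly inside $(0,1)$.
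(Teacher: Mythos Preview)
Your proposal is correct and follows essentially the same approach as the paper: both choose $\alpha=\lambda_1/(\lambda_1+\lambda_2)$ and $\lambda_{1.5}=\lambda_1+\lambda_2$, and both rely on the fact that $\nabla_\btheta\Phi(\btheta;\cdot)$ is affine in $\bx$ on a fixed activation cell (the paper packages this as a separate auxiliary lemma, while you compute it explicitly). The only cosmetic difference is that the paper also remarks on the primal-feasibility and complementary-slackness conditions, but since the definition of a KKT set in this paper involves only $L_{\mathrm{stationary}}$ and $L_\lambda$, your verification is sufficient for the statement as written.
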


\begin{lemma}[Split]\label{lma:split_points_bias}
    Let $S$ be a KKT set and let $\bx_1 \in S$ be a point with coefficient $\lambda_1 > 0$. Then, for all $\alpha, \beta > 0$ and for all $\bnu \in \real^d$ such that 
    $\bz_1 = \bx_1 + \alpha \bnu$ and $\bz_2 = \bx_1 - \beta \bnu$ have the same activation pattern and classification as $\bx_1$, 
    the set  $S^\prime \coloneqq (S \setminus \{\bx_1\}) \cup \{\bz_1, \bz_2\}$ is a KKT set.
\end{lemma}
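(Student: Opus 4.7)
The plan is to exhibit explicit nonnegative multipliers $\lambda'_1, \lambda'_2$ attached to $\bz_1, \bz_2$ (leaving every other multiplier unchanged) that make both components of $L_{\mathrm{KKT}}$ vanish on $S'$. The guiding observation is that, by the definition of $\bz_1$ and $\bz_2$, the point $\bx_1$ lies on the segment between them; concretely,
\[
\bx_1 \;=\; \tfrac{\beta}{\alpha+\beta}\,\bz_1 \;+\; \tfrac{\alpha}{\alpha+\beta}\,\bz_2.
\]
This motivates the choice $\lambda'_1 \coloneqq \tfrac{\beta}{\alpha+\beta}\lambda_1$ and $\lambda'_2 \coloneqq \tfrac{\alpha}{\alpha+\beta}\lambda_1$. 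These are positive since $\alpha,\beta,\lambda_1 > 0$, hence $L_\lambda(S') = 0$, and they satisfy the two scalar identities
\[
\lambda'_1 + \lambda'_2 \;=\; \lambda_1, \qquad \lambda'_1 \bz_1 + \lambda'_2 \bz_2 \;=\; \lambda_1 \bx_1.
\]

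Since the stationarity sum in Eq.~\eqref{eq:stationary} decomposes termwise and every term other than the one indexed by $\bx_1$ is unchanged when passing from $S$ to $S'$, it suffices to establish the single identity
\[
\lambda_1 \nabla_\btheta\!\bigl[y_1 \Phi(\btheta;\bx_1)\bigr] \;=\; \lambda'_1 \nabla_\btheta\!\bigl[y_1 \Phi(\btheta;\bz_1)\bigr] + \lambda'_2 \nabla_\btheta\!\bigl[y_1 \Phi(\btheta;\bz_2)\bigr].
\]
The key point is that $\bx_1, \bz_1, \bz_2$ share both the label $y_1$ and the activation pattern. Freezing every ReLU indicator to match this common pattern collapses $\Phi(\btheta;\cdot)$ on the corresponding activation region to an affine function of the input, so each coordinate of $\nabla_\btheta[y_1\Phi(\btheta;\bx)]$ is affine in $\bx$. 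For instance, in the two-layer case, $\partial_{\bw_j}[y_1\Phi] = y_1 v_j \mathbbm{1}_j\,\bx$ is linear in $\bx$, $\partial_{b_j}[y_1\Phi] = y_1 v_j \mathbbm{1}_j$ is constant in $\bx$, and $\partial_{v_j}[y_1\Phi] = y_1\,\mathbbm{1}_j(\bw_j^\top \bx + b_j)$ is affine in $\bx$; the same structure persists in deeper ReLU networks once the activation pattern is frozen. Combining this affine-in-$\bx$ structure with the two scalar identities above yields the required gradient equality coordinate by coordinate, and hence $L_{\mathrm{stationary}}(S') = 0$.

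The main potential obstacle is verifying the affine-in-$\bx$ structure of $\nabla_\btheta \Phi(\btheta;\bx)$ in full generality, not only for two-layer ReLU networks. I would handle this by noting that a homogeneous ReLU network with a frozen activation pattern reduces to a composition of affine maps in $\bx$, so on the activation cell one can write $\Phi(\btheta;\bx) = A(\btheta)\bx + c(\btheta)$ for suitable parameter-dependent $A, c$; each parameter partial is then of the form $(\partial_\btheta A)\bx + \partial_\btheta c$, which is affine in $\bx$. A reassuring consistency check is that Lemma~\ref{lma:unite_points_bias} (Merge) is precisely the reverse of this construction, confirming that the convex-combination invariance is the fundamental mechanism at play.
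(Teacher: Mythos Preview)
Your proposal is correct and follows essentially the same approach as the paper: choose the multipliers $\lambda'_1=\tfrac{\beta}{\alpha+\beta}\lambda_1$ and $\lambda'_2=\tfrac{\alpha}{\alpha+\beta}\lambda_1$, use the identities $\lambda'_1+\lambda'_2=\lambda_1$ and $\lambda'_1\bz_1+\lambda'_2\bz_2=\lambda_1\bx_1$, and conclude via the affine-in-$\bx$ structure of $\nabla_\btheta\Phi$ on a fixed activation region (which the paper isolates as Lemma~\ref{lma:gradient-is-convex}). The only cosmetic difference is that the paper writes $\nabla_\btheta\Phi(\btheta;\bx)=\bba\bx+\bc$ and computes directly, whereas you spell out the affine structure coordinate-wise; the underlying argument is the same.
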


The above lemmas offer a constructive approach for discovering new global minima of the objective in \Eqref{Eq::kkt_loss}. While the former allows merging two points to create a new KKT set, the latter enables splitting a single point into two. A more visual illustration of this concept can be found in Figure~\ref{fig:unite_split_examples}.

\begin{figure}[!b]
    \centering
    \vskip -20pt\begin{subfigure}[t]{0.48\textwidth}
        \centering
        \includegraphics[width=0.65\textwidth]{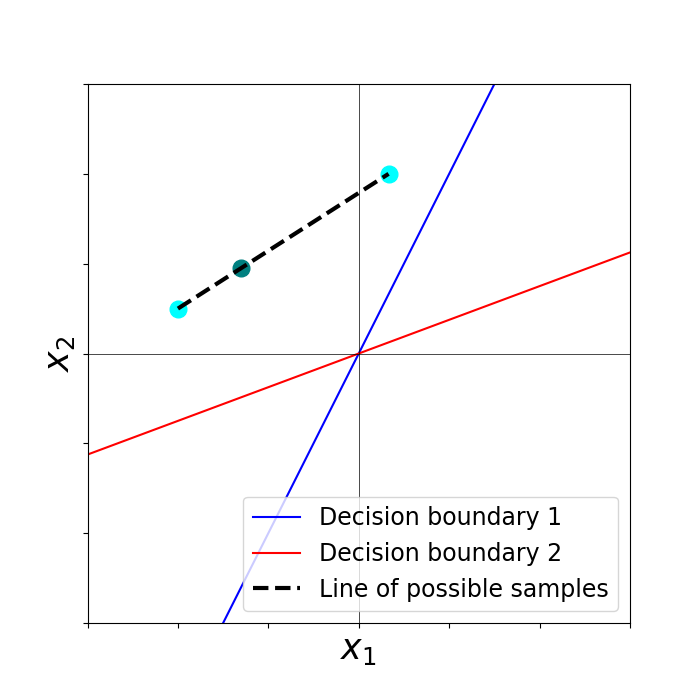}
        \caption{Example of Lemma~\ref{lma:unite_points_bias}. The green point on the dotted line can replace the two points at its edges.}
        \label{fig:unite_example}
    \end{subfigure}
    \hfill
    \begin{subfigure}[t]{0.48\textwidth}
        \centering
        \includegraphics[width=0.65\textwidth]{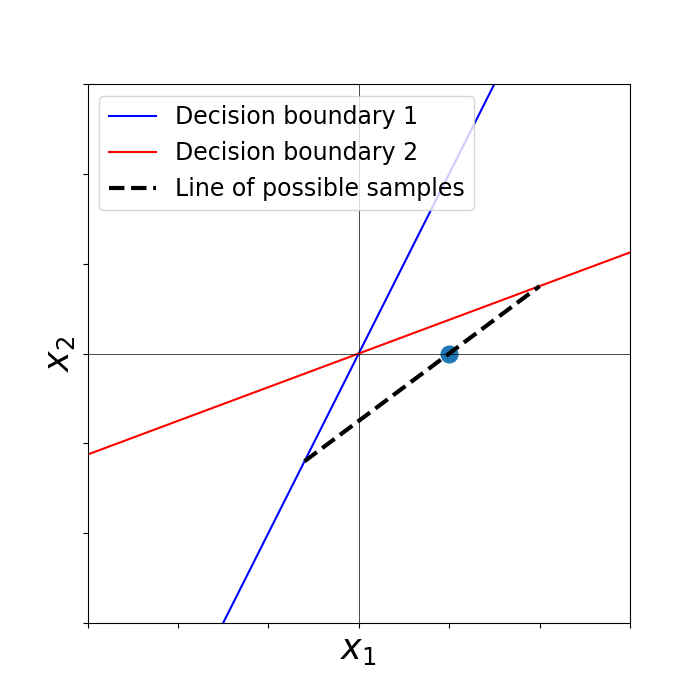}
        \caption{Example of Lemma~\ref{lma:split_points_bias}. The blue point can be split into two points along the dotted line.}
        \label{fig:split_example}
    \end{subfigure}
\caption{Illustration of Lemmas~\ref{lma:unite_points_bias}~and~\ref{lma:split_points_bias} which demonstrate how splitting and merging can allow us to constructively explore the solution space of the attacker's optimization problem, presented in Definition~\ref{def::kkt_loss}. An instance can be split into two instances along a valid direction (Subfigure~\ref{fig:unite_example}) or, conversely, two instances can be combined into a single representative instance (Subfigure~\ref{fig:split_example}). Note that the instances obtained through splitting or merging preserve the same activation pattern as the original instance(s).}
\label{fig:unite_split_examples}
\end{figure}

While these technical results provide means to explore the loss surface of the objective in \Eqref{Eq::kkt_loss}, they do not provide a quantifiable assessment of the extent to which KKT sets can be manipulated by merging and splitting. While merging imposes few constraints, it provides more limited alterations to the dataset. In contrast, splitting allows flexible alterations, but further requires that the new points must have the activation pattern and classification of the original point from which they are derived, which may constrain their distance from it. Conveniently, under the assumption that the set of training examples does not span the entire data domain, we can show that this distance is unbounded.

\begin{theorem}\label{thm:subspace}
    Let $S = \{\bx_1,\dots,\bx_n\} \subset \real^d$ be the training set, and let $\Phi(\btheta; \bx)$ a 2-layer neural network that was trained on $S$ and reached a KKT point. If $\spn\{\bx_1,\dots,\bx_n\} \subsetneq \real^d$, then for all $r > 0$, there exists a KKT set $S_r$ such that $d(S, S_r) > r$.
    Moreover, all points in $S_r$ are on the margin. That is, $|\Phi(\btheta;\bx_r)| = p$ for all $\bx_r \in S_r$.
\end{theorem}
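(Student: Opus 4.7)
The key observation is that at a KKT point of a 2-layer ReLU network, the stationarity condition $\btheta=\sum_i \lambda_i \nabla_{\btheta}[y_i\Phi(\btheta;\bx_i)]$ forces each first-layer weight to satisfy $\bw_j=\sum_i \lambda_i y_i v_j\,\mathbbm{1}[\bw_j^\top\bx_i+b_j>0]\,\bx_i$. Hence $\bw_j\in\spn\{\bx_i:\lambda_i>0\}\subseteq\spn(S)$. Since $\spn(S)\subsetneq\real^d$, we may choose a unit vector $\bnu$ in the orthogonal complement of $\spn(S)$, so that $\bw_j^\top\bnu=0$ for every neuron $j$. Translating any point along $\bnu$ leaves every pre-activation $\bw_j^\top\bx+b_j$ exactly unchanged, and therefore preserves the activation pattern, the network output $\Phi(\btheta;\bx)$, and the classification. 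This is the invariant direction that will allow unbounded perturbations.

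Armed with this, the plan is to construct $S_r$ by iteratively applying the Split lemma (Lemma~\ref{lma:split_points_bias}) to every on-margin training point. Let $S^{*}:=\{\bx_i\in S:\lambda_i>0\}$; this set is non-empty, since otherwise stationarity forces $\btheta=0$ and the network is trivial. For each $\bx_i\in S^{*}$, set $\bz_i^{+}:=\bx_i+2r\bnu$ and $\bz_i^{-}:=\bx_i-2r\bnu$. By the invariance observation, $\bz_i^{\pm}$ share the activation pattern and classification of $\bx_i$, so the preconditions of Lemma~\ref{lma:split_points_bias} are met, and each split produces a new KKT set. After processing all points of $S^{*}$, we arrive at the KKT set $S':=(S\setminus S^{*})\cup\bigcup_{\bx_i\in S^{*}}\{\bz_i^{+},\bz_i^{-}\}$.

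It remains to discard the off-margin remnants $S\setminus S^{*}$. These points carry $\lambda_i=0$, contribute nothing to the stationarity sum, and their removal leaves all KKT conditions in Definition~\ref{def::kkt_loss} satisfied with the remaining non-negative multipliers. Thus $S_r:=\bigcup_{\bx_i\in S^{*}}\{\bz_i^{+},\bz_i^{-}\}$ is itself a KKT set. Moreover, every point of $S_r$ is on the margin: $\Phi(\btheta;\bz_i^{\pm})=\Phi(\btheta;\bx_i)$ by $\bnu$-invariance, and $|\Phi(\btheta;\bx_i)|=p$ because $\bx_i\in S^{*}$ is on the margin.

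For the distance bound, fix any $\bx_j\in S$ and any $\bz_i^{\pm}\in S_r$ and decompose $\bz_i^{\pm}-\bx_j=(\bx_i-\bx_j)\pm 2r\bnu$. Since $\bx_i-\bx_j\in\spn(S)$ is orthogonal to $\bnu$, Pythagoras yields $\|\bz_i^{\pm}-\bx_j\|^{2}=\|\bx_i-\bx_j\|^{2}+4r^{2}\geq 4r^{2}$, hence $d(S,S_r)\geq 2r>r$. The main obstacle I anticipate is formally verifying that the Split lemma can be iterated: I need to check that after each split the two new points receive positive KKT coefficients (so that a subsequent split remains legitimate on some later point), and that replacing $\bx_i$ by $\{\bz_i^{+},\bz_i^{-}\}$ does not disturb the multipliers of previously-handled points. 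Both should follow from the constructive proof of Lemma~\ref{lma:split_points_bias} combined with the fact that the network parameters $\bw_j,b_j,v_j$ are fixed throughout and $\bnu$-invariance keeps the lemma's preconditions uniformly satisfied across iterations, but this requires careful bookkeeping.
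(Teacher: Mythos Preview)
Your proposal is correct and follows essentially the same route as the paper: both use stationarity to show $\bw_j\in\spn(S)$, pick $\bnu\perp\spn(S)$ so that all pre-activations are $\bnu$-invariant, and then iterate Lemma~\ref{lma:split_points_bias} along $\bnu$ to push the split points arbitrarily far. Your explicit treatment of off-margin points (discarding those with $\lambda_i=0$) and the Pythagorean distance bound are slightly cleaner than the paper's version, which simply assumes all points lie on the margin and expands the squared norm; your stated concern about iterating the Split lemma is resolved exactly as you guessed, since the network parameters are fixed and $\bnu$-invariance holds uniformly.
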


Interestingly, the above theorem implies that the KKT sets that we construct provide a twofold defense: not only are they global minima of the KKT loss, but all points in these sets also lie on the margin. Consequently, even if the margin value $p$ is leaked, the attacker cannot use it to distinguish the actual training set from an arbitrary KKT set.

We stress that, due to Equations~\ref{eq:stationary} and~\ref{eq:zero_lam}, the above theorem pertains only to points on the margin. Thus, as with all other results in this subsection, it continues to hold even if arbitrarily many non-margin points are added to $S$. However, for the sake of simplicity, we assume here that all points in $S$ are on the margin.

The assumption that the training examples do not span the entire data domain can be justified by certain real-world datasets, which concentrate on low-dimensional structures or even unions of subspaces within the entire domain, a property commonly exploited in practice \citep{elhamifar2013sparse}. For example, this is evident in the MNIST dataset, where most images consist primarily of black pixels. Still, in many cases, the data manifold is not a proper subspace of the domain, but instead can be closely approximated by such a subspace. To provide meaningful guarantees in such cases, we present the following theorem.

\begin{theorem}\label{thm:kkt_maximal_distance}
    Let $S = \{\bx_1,\dots,\bx_n\} \subset \real^d$ be a KKT set, let $\Phi(\btheta; \bx)$ be a 2-layer, width-$k$ neural network that was trained on $S$ and reached a KKT point, and suppose that $\gamma>0$ and $\bnu \in \real^d$ satisfy $\langle\bnu,\bx_i\rangle\le\gamma$ for all $i$.
    %Let us write $S$ as a column matrix $\bbs = (\bx_1,\dots,\bx_n)$ and let $\bbs = \bbu \bSigma \bbv^\top$ be its singular value decomposition (SVD), where $\bbu \in \real^{d \times d},  \bbv \in \real^{n \times n}$, and $\bSigma \in \real^{d \times n}$ is a diagonal matrix with entries $\sigma_1 \geq \dots \geq \sigma_d > 0$.
    Then, for all $\bx_i \in S$ and any $\alpha, \beta$ not exceeding
    \[
       \min_{j \in [k]} \frac{|D_j(\bx_i)| \cdot \|\bw_j\|}{\sum_{i=1}^n \lambda_i} \cdot \frac{1}{\gamma},
    \]
    $S'\coloneqq (S \setminus \{\bx_1\}) \cup \{\bx_i + \alpha \bnu, \bx_i - \beta \bnu\}$ is a KKT set.
    %$\bx_i$ can be split along $\bnu$ into $\bx_i + \alpha \bnu$ and $\bx_i - \beta \bnu$, where any $\alpha, \beta$ not exceeding are always valid choices.
\end{theorem}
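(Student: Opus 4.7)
The plan is to reduce Theorem~\ref{thm:kkt_maximal_distance} directly to the Split Lemma (Lemma~\ref{lma:split_points_bias}). Once I show that both $\bz_1\coloneqq\bx_i+\alpha\bnu$ and $\bz_2\coloneqq\bx_i-\beta\bnu$ share the activation pattern and the classification of $\bx_i$ whenever $\alpha,\beta$ satisfy the stated bound, the lemma immediately yields that $S'$ is a KKT set. For each neuron $j\in[k]$, the activation status at $\bz_1$ coincides with that at $\bx_i$ precisely when the perturbation does not cross the hyperplane $\{\bx:\langle\bw_j,\bx\rangle+b_j=0\}$, i.e.,
\[
\alpha\,|\langle\bw_j,\bnu\rangle| \;\leq\; |\langle\bw_j,\bx_i\rangle+b_j| \;=\; |D_j(\bx_i)|\cdot\|\bw_j\|,
\]
with the analogous inequality for $\bz_2$. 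So it suffices to prove $|\langle\bw_j,\bnu\rangle|\leq\gamma\sum_{i=1}^n\lambda_i$ uniformly in $j\in[k]$, since then minimizing over $j$ reproduces the bound in the statement.

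For this bound I would invoke the KKT stationarity identity (\Eqref{eq:stationary}), which expresses $\bw_j$ as a linear combination of the training inputs activating neuron $j$, with coefficients proportional to $\lambda_i y_i$. Pairing this identity with $\bnu$, symmetrizing the one-sided hypothesis $\langle\bnu,\bx_i\rangle\leq\gamma$ to handle both $\pm\bnu$ (which suffices because only $|\langle\bw_j,\bnu\rangle|$ enters the activation condition), and using $|y_i|=1$, collapses everything into the desired $\gamma\sum_{i=1}^n\lambda_i$; any residual second-layer factor $v_j$ is absorbed into the normalization via the homogeneity-balance relations that hold at any KKT point of a 2-layer ReLU network. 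Classification preservation then follows as a corollary: with every activation pattern preserved, $\Phi(\btheta;\cdot)$ restricts to a single affine piece along the segments $[\bx_i,\bz_1]$ and $[\bx_i,\bz_2]$, and since $\lambda_i>0$ places $\bx_i$ on the margin by complementary slackness (\Eqref{eq:zero_lam}), the small perturbations permitted by the activation bound keep $\Phi(\btheta;\cdot)$ strictly on the $y_i$-side of zero, so the Split Lemma applies.

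The main obstacle I anticipate is the second step above: cleanly propagating the second-layer factor $v_j$ through the stationarity expansion so that the final bound comes out with exactly the constants stated, with no spurious $|v_j|$ or $\|\bw_j\|$ scaling surviving. I expect to resolve this via the balance identity for homogeneous 2-layer ReLU networks (which at a KKT point ties $|v_j|$ to $\|\bw_j\|$ up to lower-order bias contributions). A secondary subtlety is the one-sided form of the hypothesis $\langle\bnu,\bx_i\rangle\leq\gamma$, which I would handle by partitioning the stationarity sum according to the signs of $y_i v_j$ and applying the hypothesis to each partition separately.
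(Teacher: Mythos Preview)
Your approach is essentially the paper's: reduce to the Split Lemma by showing that $\bz_1,\bz_2$ retain $\bx_i$'s activation pattern, using the stationarity identity $\bw_j = v_j\sum_i\lambda_i y_i\sigma'_{i,j}\bx_i$ (Lemma~\ref{lem:kkt_derivatives}) to bound $|\langle\bw_j,\bnu\rangle|\leq\gamma\sum_i\lambda_i$ and then minimize over $j$. The paper handles your anticipated obstacles informally---it silently drops the $v_j$ factor in that bound, treats the hypothesis as if it were two-sided $|\langle\bnu,\bx_i\rangle|\leq\gamma$, and does not separately verify classification preservation---so your flagged subtleties are real but not points of divergence from the paper's argument.
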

In words, the theorem implies that each point $\bx_i$ can be split into two points, each at least $\min_{j \in [k]} \frac{|D_j(\bx_i)| \cdot \|\bw_j\|}{\sum_{i=1}^n \lambda_i} \cdot \frac{1}{\gamma}$ away from $\bx_i$ (larger values of $\alpha$ and $\beta$ may also be admissible). Note that $d \leq n$, since otherwise one could find a vector $\bnu$ orthogonal to all data points, giving $\gamma = 0$ and reducing the setting to the assumptions of Theorem~\ref{thm:subspace}. We stress that smaller values of $\gamma$ allow points to be split farther -- a situation that arises, for example, when the data manifold approximately lies on a linear subspace of the domain. Moreover, we prove that $\gamma \le \sigma_d$, where $\sigma_d$ is the smallest singular value of the data matrix $S$ written as a column matrix $\bbs = (\bx_1, \dots, \bx_n)$ (see Appendix~\ref{app:SVD} for further discussion). Geometrically, this means that the minimal splitting distance is controlled by the ``thinnest" direction of the data cloud: the smaller the smallest singular value, the more freedom there is to alter points along directions perpendicular to those where the data is nearly flat.

\subsection{Analysis beyond the idealized setting: approximate KKT points}\label{sec:almost-kkt}

In the previous subsection, we assumed that the model converged to a KKT point. Since, in practice, a trained network may only approximate such a stationary point rather than precisely attain it, this subsection provides a more natural yet analytically tractable foundation for studying the associated privacy risks. To that end, we will shortly formulate the following relaxation of the KKT condition.

%Up until now, we have talked about the weights vector $\btheta$ satisfies the KKT condition. In reality, this is not the case. $\btheta$ satisfies $(\varepsilon, \delta)$-KKT.

\begin{definition}[$(\varepsilon, \delta)$-KKT -- paraphrased from \citet{Lyu2020Gradient, KKT2020}] \label{def:almost_kkt}\hfill\\
    $\btheta$ satisfies $(\varepsilon, \delta)$-KKT if the following holds:
    \begin{tasks}(2) % The (2) sets the number of columns
        \task $\|\btheta - \sum_{i=1}^n \lambda_i y_i \nabla_{\btheta} \Phi(\btheta; \bx_i) \|_2 \leq \varepsilon$.\label{eq:epsilon_kkt}
        \task $\forall i, ~ y_i \Phi(\btheta; \bx_i) \geq p > 0$.\label{eq:almost_margin}
        \task $\lambda_1, \dots, \lambda_n \geq 0$. \label{eq:almost_positive_lambdas}
        \task $\forall i,~ \lambda_i (y_i \Phi(\btheta; \bx_i) - p) \leq \delta$.\label{eq:delta_kkt}
    \end{tasks}
    %Observe that the inequality $0 \leq \lambda_i (y_i \cdot \phi(\btheta; \bx_i) - p)$ is already implied by condition~\ref{eq:almost_margin}; however, it is stated explicitly in condition~\ref{eq:delta_kkt} in order to emphasize this requirement.
\end{definition}

Following the above definition, the following is a natural generalization of a KKT set.

\begin{definition}[$(\varepsilon, \delta)$-KKT sets]\label{def:epsilon_kkt_set}
    Let $\Phi(\btheta; \bx)$ be a homogeneous neural network, and let $S= \{\bx_1, \dots, \bx_n\}$. We say that $S$ is an $(\varepsilon,\delta)$-KKT set if there exist nonnegative $\lambda_1,\ldots,\lambda_n$ such that $\btheta$ is $(\varepsilon,\delta)$-KKT.
    %\[
    %    \|\btheta - \sum_{i=1}^n \lambda_i y_i \nabla_{\btheta} \Phi(\btheta; \bx_i) \|_2 \leq \varepsilon
    %\]
\end{definition}
We note that optimizing the objective in Condition~\ref{eq:epsilon_kkt} to accuracy $\varepsilon$ produces $(\varepsilon,\delta)$-KKT sets for some $\delta>0$. For brevity, when $\delta$ is irrelevant, we simply refer to these as $\varepsilon$-KKT sets. This convention aligns with the setting in \citet{haim2022reconstructing}, who similarly disregard the $\delta$ term, since it is not used in their attack.

The following are extensions of the merging and splitting lemmas from the previous subsection.

\begin{lemma}[Approximate-KKT merge]\label{lma:unite_points_bias_almost_kkt}
    Let $S$ be an $\varepsilon$-KKT set and let $\bx_1,\bx_2 \in S$ be a point with coefficients $\lambda_1, \lambda_2 > 0$. Then, there exists $\alpha \in (0,1)$ such that the set $S^\prime = (S \setminus \{\bx_1,\bx_2\}) \cup \{\alpha \bx_1 + (1-\alpha) \bx_2\}$ is also an $\varepsilon$-KKT set.
\end{lemma}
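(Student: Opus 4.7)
The plan is to mimic the construction from Lemma~\ref{lma:unite_points_bias}: set $\alpha = \lambda_1/(\lambda_1+\lambda_2) \in (0,1)$, let $\bx_{1.5} = \alpha\bx_1+(1-\alpha)\bx_2$, and assign it the new multiplier $\lambda_{1.5} = \lambda_1+\lambda_2$, while leaving all other $(\bx_k, \lambda_k)$ untouched. As in the analogous exact statement, I would require (inheriting implicitly from Lemma~\ref{lma:unite_points_bias}, which the current statement appears to elide) that $\bx_1$ and $\bx_2$ share a label $y$ and an activation pattern; otherwise the gradient contributions cannot be combined cleanly and no convex combination will suffice.

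The key verification is that replacing $(\bx_1,\lambda_1),(\bx_2,\lambda_2)$ by $(\bx_{1.5},\lambda_{1.5})$ preserves the gradient sum $\sum_i \lambda_i y_i \nabla_\btheta\Phi(\btheta;\bx_i)$ \emph{exactly}. This follows from the piecewise-linear structure of the ReLU network: the common activation region of $\bx_1$ and $\bx_2$ is an intersection of half-spaces, hence convex, and thus contains the whole segment $[\bx_1,\bx_2]$ and in particular $\bx_{1.5}$. On this region each gradient component with respect to $\bw_j$, $b_j$, or $v_j$ is affine in the input, so that $\lambda_1\nabla_\btheta[y\Phi(\btheta;\bx_1)] + \lambda_2\nabla_\btheta[y\Phi(\btheta;\bx_2)] = \lambda_{1.5}\nabla_\btheta[y\Phi(\btheta;\bx_{1.5})]$ by direct substitution. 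Consequently Condition~\ref{eq:epsilon_kkt} of Definition~\ref{def:almost_kkt} continues to hold with the same $\varepsilon$, and Condition~\ref{eq:almost_positive_lambdas} is immediate from $\lambda_{1.5}>0$. For the margin Condition~\ref{eq:almost_margin}, affinity of $\Phi(\btheta;\cdot)$ along $[\bx_1,\bx_2]$ gives $y\Phi(\btheta;\bx_{1.5}) = \alpha\,y\Phi(\btheta;\bx_1) + (1-\alpha)\,y\Phi(\btheta;\bx_2) \geq p$, since both original points satisfy this lower bound; the $\delta$ slack is irrelevant per the remark following Definition~\ref{def:epsilon_kkt_set}.

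The main (mild) obstacle is precisely the reliance on shared activation patterns and labels; without them, neither the gradient-preservation nor the margin-affinity argument goes through. Notably, the $\varepsilon$ budget is never consumed anywhere in the construction -- the stationarity residual is kept \emph{invariant}, not merely within $\varepsilon$ -- so $S'$ inherits the same $\varepsilon$ as $S$. Thus the approximate-KKT merge reduces, essentially verbatim, to the exact-KKT merge of Lemma~\ref{lma:unite_points_bias}, together with the observation that the $\varepsilon$-relaxation costs us nothing here.
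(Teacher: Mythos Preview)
Your proposal is correct and follows essentially the same route as the paper: the same choice $\alpha=\lambda_1/(\lambda_1+\lambda_2)$ with multiplier $\lambda_{1.5}=\lambda_1+\lambda_2$, the same observation that the gradient sum is preserved \emph{exactly} by affinity on the shared activation region (the paper invokes Lemma~\ref{lma:gradient-is-convex} for this), and the same check of the remaining conditions. You are also right to flag that the statement, as written, tacitly inherits the same-label and same-activation-pattern hypotheses from Lemma~\ref{lma:unite_points_bias}; the paper's own proof relies on these as well.
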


\begin{lemma}[Approximate-KKT split]\label{lma:split_points_bias_almost_kkt}
    Let $S$ be an $\varepsilon$-KKT set and let $\bx_1 \in S$ be a point with coefficient $\lambda_1 > 0$. 
    Consider the two points: $\bz_1 = \bx_1 + \alpha \bnu$ and $\bz_2 = \bx_1 - \beta \bnu$ where $\alpha,\beta > 0$ and $\bnu \in \real^d$ such that $\bz_1, \bz_2$ have the same activation pattern and classification as $\bx_1$. Then, the set $S^\prime = (S \setminus \{\bx_1\}) \cup  \{\bz_1, \bz_2\}$ is also an $\varepsilon$-KKT set.
\end{lemma}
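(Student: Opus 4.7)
The plan is to adapt the exact-KKT splitting argument of Lemma~\ref{lma:split_points_bias} to the approximate setting by choosing the two new Lagrange multipliers so that the stationarity residual is preserved \emph{exactly}, not merely approximately.

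The key ingredient is the affine structure of the parameter-gradient on a fixed activation region. For the 2-layer ReLU network and an input with activation indicators $(a_j)_j\in\{0,1\}^k$, we have $\nabla_{v_j}\Phi(\btheta;\bx)=a_j(\bw_j^\top \bx+b_j)$, $\nabla_{\bw_j}\Phi(\btheta;\bx)=v_j a_j\,\bx$, and $\nabla_{b_j}\Phi(\btheta;\bx)=v_j a_j$. Hence $\bx\mapsto\nabla_{\btheta}\Phi(\btheta;\bx)$ is an affine map on the region shared by $\bx_1,\bz_1,\bz_2$, so there is a direction $\bg$ in parameter space (depending only on $\btheta$, the common activation pattern, and $\bnu$) such that
\[
\nabla_{\btheta}\Phi(\btheta;\bz_1)=\nabla_{\btheta}\Phi(\btheta;\bx_1)+\alpha\bg,\qquad \nabla_{\btheta}\Phi(\btheta;\bz_2)=\nabla_{\btheta}\Phi(\btheta;\bx_1)-\beta\bg.
\]

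With this in hand, I would define the new multipliers
\[
\lambda'_1\coloneqq\frac{\beta}{\alpha+\beta}\lambda_1,\qquad \lambda'_2\coloneqq\frac{\alpha}{\alpha+\beta}\lambda_1,
\]
which are strictly positive (so dual feasibility, Definition~\ref{def:almost_kkt}(3), is immediate), satisfy $\lambda'_1+\lambda'_2=\lambda_1$, and crucially satisfy $\alpha\lambda'_1-\beta\lambda'_2=0$. Substituting,
\[
\lambda'_1 y_1\nabla_{\btheta}\Phi(\btheta;\bz_1)+\lambda'_2 y_1\nabla_{\btheta}\Phi(\btheta;\bz_2)=(\lambda'_1+\lambda'_2)\,y_1\nabla_{\btheta}\Phi(\btheta;\bx_1)+(\alpha\lambda'_1-\beta\lambda'_2)\,y_1\bg=\lambda_1 y_1\nabla_{\btheta}\Phi(\btheta;\bx_1),
\]
so replacing the single summand at $\bx_1$ by the two summands at $\bz_1,\bz_2$ leaves the sum $\sum_i\lambda_i y_i\nabla_{\btheta}\Phi(\btheta;\bx_i)$ unchanged. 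Consequently, the $\varepsilon$-stationarity bound of Definition~\ref{def:almost_kkt}(1) transfers verbatim from $S$ to $S'$.

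The remaining condition of Definition~\ref{def:almost_kkt} (the margin condition~(2)) for the new points is supplied by the assumption that $\bz_1,\bz_2$ share the classification of $\bx_1$; for the untouched points in $S\setminus\{\bx_1\}$ it is inherited from $S$ being an $\varepsilon$-KKT set. The $\delta$ condition~(4) is disregarded per the convention fixed just before the lemma statement. The only real subtlety to overcome is recognizing the affine structure on the shared activation region and the ``centroid-preserving'' split $\lambda'_1\bz_1+\lambda'_2\bz_2=\lambda_1\bx_1$, which cancels the first-order perturbation in $\bnu$ and is precisely what keeps the stationarity residual preserved exactly rather than degraded; I do not anticipate a significant obstacle beyond this bookkeeping, and the argument mirrors the exact-KKT case with equality replaced throughout by the $\le\varepsilon$ bound.
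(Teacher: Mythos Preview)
Your proposal is correct and follows essentially the same approach as the paper: both exploit the affine structure of $\nabla_{\btheta}\Phi(\btheta;\cdot)$ on the shared activation region and assign the multipliers $\frac{\beta}{\alpha+\beta}\lambda_1$ and $\frac{\alpha}{\alpha+\beta}\lambda_1$ to $\bz_1,\bz_2$ so that the weighted gradient sum, and hence the $\varepsilon$-stationarity residual, is preserved exactly. The only cosmetic difference is that the paper writes the affine map as $\bba\bx+\bc$ whereas you parameterize the perturbation via a direction $\bg$; the bookkeeping is identical.
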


In the following two subsections, we examine the budget for splitting data points in the almost KKT setting. Each subsection assumes full knowledge of $\varepsilon$ but considers a different level of knowledge the attacker has about $\delta$.

The proofs of lemmas and theorems are relegated to Appendix~\ref{sec:almost_kkt_proofs}.

\subsubsection{The attacker possesses no information about $\delta$} \label{sec:almost_kkt_no_knowledge_delta}

As a starting point, we first assume in this subsection that the attacker has no information about $\delta$. In such a case, Lemma~\ref{lma:split_points_bias_almost_kkt} can still be used as long as the new points are in the same activation pattern as the original point.

The following theorem provides a minimal guarantee on the budget for splitting a data point in this setting.

\begin{theorem} \label{thm:distance_almost_kkt}
    Let $S = \{\bx_1,\dots,\bx_n\} \subset \real^d$ be an $\varepsilon$-KKT set for some $\varepsilon>0$, let $\Phi(\btheta; \bx)$ be a 2-layer, width-$k$ neural network that was trained on $S$ and reached an $(\varepsilon, \delta)$-KKT point, and suppose that $\gamma>0$ and $\bnu \in \real^d$ satisfy $\langle\bnu,\bx_i\rangle\le\gamma$ for all $i$.
    Then, for all $\bx_i \in S$ and any $\alpha, \beta$ not exceeding
    \begin{equation}\label{eq:alpha_beta_budget}
        \min_{j \in [k]} \frac{|D_j(\bx_l)| \|\bw_j\|}      {\varepsilon + \gamma |v_j| \sum_{i=1}^n \lambda_i},
    \end{equation}
    such that $\bz_1\coloneqq\bx_i + \alpha \bnu$ and $\bz_2\coloneqq \bx_i - \beta \bnu$ have the same classification as $\bx_i$, $S'\coloneqq (S \setminus \{\bx_i\}) \cup \{\bz_1,\bz_2\}$ is an $\varepsilon$-KKT set.
\end{theorem}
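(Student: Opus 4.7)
The plan is to reduce Theorem~\ref{thm:distance_almost_kkt} to a direct application of Lemma~\ref{lma:split_points_bias_almost_kkt}. That lemma already guarantees that replacing $\bx_i$ with two perturbations $\bz_1,\bz_2$ in an $\varepsilon$-KKT set preserves the $\varepsilon$-KKT property whenever $\bz_1,\bz_2$ share both the classification and the activation pattern of $\bx_i$. Classification preservation is built into the hypothesis, so the remaining task is to show that, under the budget on $\alpha,\beta$ stated in \Eqref{eq:alpha_beta_budget}, every neuron's activation pattern is preserved at both $\bz_1$ and $\bz_2$. Once activation preservation is verified, the lemma delivers the conclusion with no further bookkeeping.

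To reach the budget, I would first unpack Condition~\ref{eq:epsilon_kkt} neuron by neuron. Because $\nabla_{\bw_j}\Phi(\btheta;\bx)=v_j\mathbbm{1}_j(\bx)\,\bx$, and the $\bw_j$-subblock of the full $\varepsilon$-residual has Euclidean norm bounded by the norm of the full vector, I obtain the decomposition
\[
\bw_j \;=\; v_j \sum_{i=1}^n \lambda_i y_i \mathbbm{1}_j(\bx_i)\,\bx_i + \bxi_j, \qquad \|\bxi_j\|_2 \le \varepsilon.
\]
Taking inner products against $\bnu$ (assumed unit-norm, following the convention of Theorem~\ref{thm:kkt_maximal_distance}) and invoking the hypothesis $\langle\bnu,\bx_i\rangle \le \gamma$ together with Cauchy--Schwarz gives the scalar bound
\[
|\bw_j^\top \bnu| \;\le\; |v_j|\,\gamma \sum_{i=1}^n \lambda_i + \varepsilon.
\]
The activation at neuron $j$ is preserved along the shift $\alpha\bnu$ precisely when $|\alpha\,\bw_j^\top\bnu| \le |\bw_j^\top\bx_i + b_j| = |D_j(\bx_i)|\,\|\bw_j\|$, so substituting the displayed bound yields the sufficient condition $\alpha \le |D_j(\bx_i)|\,\|\bw_j\|/(\varepsilon + \gamma|v_j|\sum_i\lambda_i)$, with an identical condition on $\beta$. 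Taking a minimum over $j$ recovers exactly \Eqref{eq:alpha_beta_budget}, which closes the argument.

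The main obstacle is the passage from the \emph{global} Euclidean residual bound guaranteed by the $\varepsilon$-KKT condition to a usable \emph{per-neuron} bound of the form $\|\bxi_j\|_2 \le \varepsilon$; this is exactly what introduces the extra additive $\varepsilon$ in the denominator of \Eqref{eq:alpha_beta_budget}, distinguishing it from the exact-KKT budget in Theorem~\ref{thm:kkt_maximal_distance}. A secondary subtlety is the degenerate case $D_j(\bx_i)=0$, in which $\bx_i$ lies on neuron $j$'s activation hyperplane and arbitrarily small displacements may flip its sign; such neurons make the minimum in \Eqref{eq:alpha_beta_budget} vanish and should be handled by implicitly restricting the $\min$ to neurons on which $\bx_i$ has strict activation slack, which is the only regime where the preservation argument is meaningful.
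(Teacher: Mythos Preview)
Your proposal is correct and follows essentially the same route as the paper: reduce to Lemma~\ref{lma:split_points_bias_almost_kkt} by verifying activation-pattern preservation, and obtain the per-neuron bound $|\langle\bw_j,\bnu\rangle|\le \varepsilon + \gamma|v_j|\sum_i\lambda_i$ by decomposing $\bw_j$ into its stationarity approximation plus an $\varepsilon$-bounded residual (the paper packages this step as a standalone Lemma~\ref{lem:dot_product_bound}). Your remark about the degenerate case $D_j(\bx_i)=0$ is a fair caveat that the paper leaves implicit.
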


Note that when $\gamma = 0$, the above equation simplifies to $\min_{j \in [k]} \frac{|D_j(\bx_l)|\|\bw_j\|}{\varepsilon}$. Furthermore, if $\varepsilon \to 0$, then both $\alpha$ and $\beta$ become unbounded, consistent with the analogous case in Theorem~\ref{thm:kkt_maximal_distance}.

\subsubsection{The attacker has an upper bound on $\delta$} \label{sec:sec:almost_kkt_yes_knowledge_delta}

In the previous subsection, we assumed, as an initial test case, that the attacker cannot know $\delta$. This assumption, however, is brittle: in practice, some information about $\delta$ may leak, or the attacker may be able to deduce an upper bound on its value from the training dynamics.\footnote{For instance, a large value of $\delta$ may indicate that the training process terminated prematurely, leading to poor performance.} Because our splitting technique can degrade the value of $\delta$, this upper bound introduces an additional constraint: we cannot split the training set beyond the attacker’s bound, which limits the allowable budget for altering training instances. The following theorem establishes a lower bound on the permissible amount of change in this scenario.

\begin{theorem} \label{thm:delta_almost_kkt}
    Let $S = \{\bx_1,\dots,\bx_n\} \subset \real^d$ be an $(\varepsilon,\delta)$-KKT set for some $\varepsilon,\delta>0$, and let $\Phi(\btheta; \bx)$ be a 2-layer, width-$k$ neural network that was trained on $S$ and reached an $(\varepsilon, \delta)$-KKT point. Suppose that $\gamma>0$ and $\bnu \in \real^d$ satisfy $\langle\bnu,\bx_i\rangle\le\gamma$ for all $i$, and consider the dataset $S^\prime = (S \setminus \{\bx_1\}) \cup  \{\bx_l + \alpha_l\bnu, \bx_l - \beta_l \bnu\}$ for $\alpha$ and $\beta$ not exceeding \Eqref{eq:alpha_beta_budget}. Then, if $\Delta\delta<p$, $S'$ is an $(\varepsilon,\delta+\Delta\delta)$-KKT set, where
    \[
        \Delta\delta\coloneqq \lambda_l (\alpha_l+\beta_l) \sum_{j \in J} |v_j| \left(\varepsilon + \gamma |v_j| \sum_{i=1}^n \lambda_i \right).
    \]
\end{theorem}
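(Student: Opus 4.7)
The plan is to invoke Lemma~\ref{lma:split_points_bias_almost_kkt} to handle the $\varepsilon$-stationarity and non-negativity conditions of Definition~\ref{def:almost_kkt}, and then to track how the split perturbs the complementary-slackness gap via a first-order expansion of $\Phi$ inside the activation region shared by $\bx_l,\bz_1,\bz_2$. First I would set the new multipliers to
\[\lambda_{z_1}\coloneqq \frac{\beta_l}{\alpha_l+\beta_l}\lambda_l,\qquad \lambda_{z_2}\coloneqq \frac{\alpha_l}{\alpha_l+\beta_l}\lambda_l,\]
so that $\lambda_{z_1}+\lambda_{z_2}=\lambda_l$ and $\alpha_l\lambda_{z_1}=\beta_l\lambda_{z_2}$; the bound \Eqref{eq:alpha_beta_budget} on $\alpha_l,\beta_l$ forces $\bz_1,\bz_2$ to share the activation pattern $J$ of $\bx_l$, as in the proof of Theorem~\ref{thm:distance_almost_kkt}. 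Under this split the combined contribution of $(\lambda_{z_1},\bz_1),(\lambda_{z_2},\bz_2)$ to $\sum_i \lambda_i y_i \nabla_\btheta\Phi(\btheta;\bx_i)$ reproduces that of $(\lambda_l,\bx_l)$ exactly, so conditions~\ref{eq:epsilon_kkt} and~\ref{eq:almost_positive_lambdas} are preserved.

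Next I would quantify the linear change in $\Phi$ along $\bnu$. Inside $J$ the network is affine, so with $c\coloneqq \sum_{j\in J} v_j \bw_j^\top\bnu$ we get $\Phi(\btheta;\bz_1)=\Phi(\btheta;\bx_l)+\alpha_l c$ and $\Phi(\btheta;\bz_2)=\Phi(\btheta;\bx_l)-\beta_l c$. The $\bw_j$-block of the $\varepsilon$-stationarity condition gives $\|\bw_j - v_j\sum_{i\in I_j}\lambda_i y_i \bx_i\|_2\le\varepsilon$, where $I_j\coloneqq\{i : j\in J(\bx_i)\}$; combining this with $|\langle\bnu,\bx_i\rangle|\le\gamma$ (which we may assume by possibly negating $\bnu$) yields $|\bw_j^\top\bnu|\le \varepsilon+|v_j|\gamma\sum_i\lambda_i$ and therefore
\[|c|\;\le\; \sum_{j\in J}|v_j|\bigl(\varepsilon+\gamma|v_j|\textstyle\sum_i\lambda_i\bigr),\]
so $\lambda_l(\alpha_l+\beta_l)|c|\le \Delta\delta$. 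To verify condition~\ref{eq:delta_kkt} for $\bz_1$ I would expand
\[\lambda_{z_1}\bigl(y_l\Phi(\btheta;\bz_1)-p\bigr)=\lambda_{z_1}\bigl(y_l\Phi(\btheta;\bx_l)-p\bigr)+\lambda_{z_1}\alpha_l y_l c,\]
and bound the first term by $\delta$ using $\lambda_{z_1}\le\lambda_l$ together with the non-negativity of the original slack, and the second by $\lambda_{z_1}\alpha_l|c|\le\lambda_l(\alpha_l+\beta_l)|c|\le\Delta\delta$. The symmetric argument with $\beta_l$ in place of $\alpha_l$ handles $\bz_2$.

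Finally I would verify the margin condition~\ref{eq:almost_margin}. The drop in $y_l\Phi(\btheta;\bz_i)$ relative to $y_l\Phi(\btheta;\bx_l)\ge p$ is at most $\max(\alpha_l,\beta_l)|c|\le \Delta\delta/\lambda_l$, and the hypothesis $\Delta\delta<p$ keeps the shifted margin strictly positive, consistent with the $(\varepsilon,\delta+\Delta\delta)$-KKT definition. I expect this last step to be the main obstacle: the complementary-slackness inflation drops out cleanly from the affine expansion and the stationarity block, but ensuring that $y_l\Phi(\btheta;\bz_i)\ge p$ forces one to exploit the residual margin slack already present in condition~\ref{eq:delta_kkt} of the original set, and it is precisely to guarantee this positivity that the assumption $\Delta\delta<p$ is imposed.
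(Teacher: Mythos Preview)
Your proposal follows essentially the same line as the paper: invoke the approximate-split lemma to keep Conditions~\ref{eq:epsilon_kkt} and~\ref{eq:almost_positive_lambdas} intact, bound $|\langle\bw_j,\bnu\rangle|$ via the $\bw_j$-block of the $\varepsilon$-stationarity residual together with the near-orthogonality of $\bnu$ to the data (the paper packages this as a separate lemma, Lemma~\ref{lem:dot_product_bound}), and then expand the complementary-slackness term for each split point by adding and subtracting $y_l\Phi(\btheta;\bx_l)$ and bounding the two pieces separately. Your exact affine identity $\Phi(\btheta;\bz_1)=\Phi(\btheta;\bx_l)+\alpha_l c$ is marginally cleaner than the paper's $1$-Lipschitz ReLU argument, but the two coincide because $\bz_1,\bz_2$ share the activation pattern of $\bx_l$; the verification of Condition~\ref{eq:almost_margin} via $\Delta\delta<p$ is likewise handled the same way in the paper.

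One small slip: you cannot obtain the two-sided estimate $|\langle\bnu,\bx_i\rangle|\le\gamma$ from the one-sided hypothesis $\langle\bnu,\bx_i\rangle\le\gamma$ by ``possibly negating $\bnu$''---negation reverses the inequality rather than producing an absolute-value bound. The paper's own proof (through Lemma~\ref{lem:dot_product_bound}) simply works under the two-sided assumption, so the hypothesis in the theorem statement should be read with $|\cdot|$; this is a wording issue in the statement, not a flaw in your strategy.
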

When $\gamma = 0$, the expression simplifies to $\delta + \varepsilon \lambda_l(\alpha_l+\beta_l)\sum_{j=1}^k|v_j|$. Thus, although $\delta$ deteriorates with increased splitting, the effectiveness of the reconstruction attack diminishes as $\gamma$ decreases and the network approaches a KKT point, indicating that for well-trained networks on structured data, such attacks become unreliable.

\section{Experiments}\label{sec:experiments}
In this section, we present experiments that complement our theoretical analysis. Our theory demonstrates that the objective in \Eqref{Eq::kkt_loss} admits ubiquitous global minima under certain conditions. However, this alone does not rule out the possibility that the attack converges to the true training set. To investigate this, we empirically test whether training-set leakage can occur without prior knowledge. Specifically, we model the attacker’s prior knowledge as awareness of the data domain boundaries, which we incorporate into the attack by modifying its initialization distribution. For example, when the data domain consists of natural images, this corresponds to prior knowledge that the training set must lie within the range of valid pixel values $[0,1]^d$. Consequently, an attacker with this prior will initialize the attack only with values that respect this constraint. This type of prior was also used by \citet{haim2022reconstructing}, who employed both a correctly scaled initialization and a penalty in the optimization objective to ensure solutions remained within the domain of natural images.

\paragraph{No leakage without prior.}

We generated 500 synthetic training samples (250 per class) uniformly on the unit sphere $\mathbb{S}^{783}\subset\real^{784}$, and labeled them according to the sign of the first coordinate. We then trained a 2-layer width 1,000 ReLU network on this data for 500K epochs, achieving a final training loss of $10^{-7}$. To assess reconstructability, we initialize the candidate reconstructions ${\bx_i}$ on spheres with varying radii centered at the origin (each corresponding to different levels of prior available to the attacker). For each setting, we record the average distance of the 5 best reconstructions to the actual training set across multiple runs. 

Although all runs achieve similar KKT objective values (ranging from 330 to 332), they yield markedly different reconstruction qualities, revealing a strong dependence on initialization and convergence to distinct minima. In the absence of prior information, near-optimal solutions thus do not reliably recover the original samples.
\Figref{fig:avg_distances} depicts the distribution of the average distance between the training set and the best five reconstruction attempts. It demonstrates that the reconstruction error increases as the assumed radius deviates from the true domain of the data. Consequently, successful reconstruction appears to strongly depend on prior knowledge of the data domain.

\paragraph{Beyond the theoretical framework.}
We trained the same 3-layer architecture on CIFAR as was done by \citet{haim2022reconstructing}, after shifting all training samples by various magnitudes, corresponding to different levels of prior information available to the attacker.\footnote{While conceptually equivalent, this is not precisely how we obtained our network. See Appendix~\ref{app:shifting} for further details.} To reconstruct the training set, we minimized the objective in \Eqref{Eq::kkt_loss}, without incorporating any additional regularization. 

As shown in \Figref{fig:move_pixels}, the results quickly deteriorate as the attacker's prior weakens. Moreover, the reconstructions clearly resemble averages of multiple training instances, indicating that the minimum reached by the attack’s optimization procedure is in fact an interpolation of several training examples, as predicted by our theory.

Similarly to the previous synthetic case, our experiments demonstrate that prior knowledge is crucial for reconstruction attacks to succeed, and that the KKT constraints in Equations~\ref{eq:stationary}-\ref{eq:zero_lam} themselves do not necessarily leak information about the training set in the absence of prior knowledge. Moreover, even if knowledge of the data domain is leaked, shifting the training data by a secret bias effectively mitigates privacy risks.

\begin{figure}[h!]
  \centering
  % Left column (a)
  \begin{subfigure}{0.35\linewidth}
    \adjustbox{valign=t}{%
      \begin{minipage}{\linewidth}
        \centering
        \includegraphics[width=1.05\linewidth]{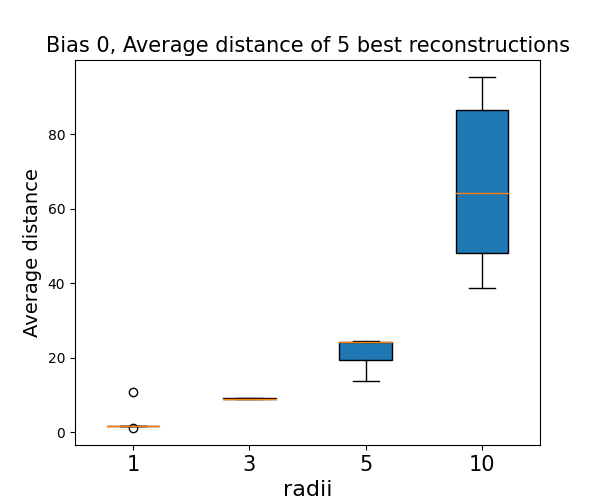}
      \end{minipage}
    }
    \caption{A comparison of the reconstruction results based on initializations with different radii, for a network that was trained with data sampled from the unit sphere, and measured by the average Euclidean distance of the 5 best reconstructions. The radius increases as the prior knowledge available to the attacker weakens, significantly degrading the quality of the reconstruction.}
    \label{fig:avg_distances}
  \end{subfigure}\hfill
  % Right column: two stacked subfigures (b) and (c)
  \begin{subfigure}{0.61\linewidth}
    \adjustbox{valign=t}{%
      \begin{minipage}{\linewidth}
        \centering
        \begin{subfigure}{\linewidth}
          \centering
          \includegraphics[width=\linewidth]{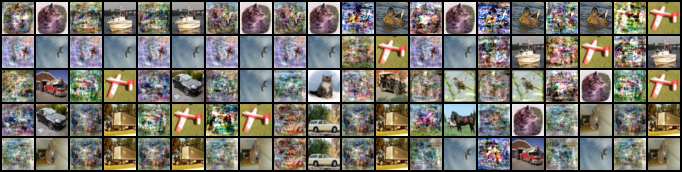}
          % \caption{Top-right.}
          \label{fig:top_right}
        \end{subfigure}

        \vspace{0.5em}

        \begin{subfigure}{\linewidth}
          \centering
          \includegraphics[width=\linewidth]{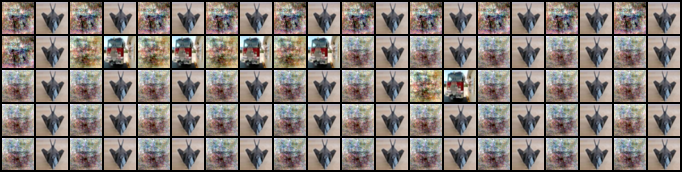}
          % \caption{Bottom-right.}
          \label{fig:bottom_right}
        \end{subfigure}
      \end{minipage}
    }
    \caption{Reconstructed CIFAR images (odd columns) and their nearest neighbors from the training set (even columns). In the top image, all training data pixels were shifted by 0.5, and in the bottom image by 5. We trained a model on the shifted data until it reached an almost-KKT point, without any regularization. The experiment demonstrates that as the data is shifted further, corresponding to a weaker prior available to the attacker, the effectiveness of the attack diminishes rapidly. While the top reconstruction still captures vague characteristics of a small subset of the training set, the bottom reconstruction fails entirely.}
    \label{fig:move_pixels}  
  \end{subfigure}

  \caption{The left figure shows the attack on the unit sphere. The images on the right show the attack on CIFAR with shifted training data.}
\end{figure}

\section{Summary and discussion}\label{sec:summary}

In this paper, we demonstrated both theoretically and empirically that the objective function underlying the reconstruction attack of \citet{haim2022reconstructing} admits ubiquitous global minima. Consequently, reconstruction is generally unreliable without prior knowledge, and this observation suggests new avenues for mitigating such attacks, for example, by shifting the training set with a secret bias. Notably, our results indicate that the implicit bias induced by gradient methods can actually prevent leakage rather than facilitate it, which may seem counterintuitive in light of previous work.

A potential future work direction could study the extent of information leakage, and possible ways to mitigate it in different types of network architectures, such as LLMs \citep{vaswani2023attentionneed}. Such architectures are trained using different, more practical optimizers \citep{refael2025adarankgrad,anonymous2025sumo,zhao2024galorememoryefficientllmtraining}, making their analysis more challenging.

Although our proposed defenses are theoretically motivated, they do not provably preclude reconstruction, since an attacker might still infer information about the data domain directly from the model. We leave the intriguing question of whether this is indeed possible and how to design provably secure defenses for future work.

\subsection*{Acknowledgments}

I.S.\ is supported by Israel Science Foundation Grant No.\ 1753/25.

\bibliography{iclr2026_conference}

\begin{thebibliography}{34}
\providecommand{\natexlab}[1]{#1}
\providecommand{\url}[1]{\texttt{#1}}
\expandafter\ifx\csname urlstyle\endcsname\relax
  \providecommand{\doi}[1]{doi: #1}\else
  \providecommand{\doi}{doi: \begingroup \urlstyle{rm}\Url}\fi

\bibitem[Abadi et~al.(2016)Abadi, Chu, Goodfellow, McMahan, Mironov, Talwar, and Zhang]{deep_learning_with_DP_OG_paper}
M.~Abadi, A.~Chu, I.~Goodfellow, H.~B. McMahan, I.~Mironov, K.~Talwar, and L.~Zhang.
\newblock Deep learning with differential privacy.
\newblock In \emph{Proceedings of the 2016 ACM SIGSAC Conference on Computer and Communications Security}, CCS '16, page 308–318, New York, NY, USA, 2016. Association for Computing Machinery.
\newblock ISBN 9781450341394.

\bibitem[Balle and Wang(2018)]{Balle2018Subsampled}
B.~Balle and Y.-X. Wang.
\newblock Privacy amplification by subsampling: Tight analyses via couplings and divergences.
\newblock In \emph{NeurIPS}, pages 6280--6290, 2018.

\bibitem[Bombari and Mondelli(2025)]{bombari2025privacy}
S.~Bombari and M.~Mondelli.
\newblock Privacy for free in the overparameterized regime.
\newblock \emph{Proceedings of the National Academy of Sciences}, 122\penalty0 (15):\penalty0 e2423072122, 2025.

\bibitem[Buzaglo et~al.(2023{\natexlab{a}})Buzaglo, Haim, Yehudai, Vardi, and Irani]{buzaglo2023reconstructingtrainingdatamulticlass}
G.~Buzaglo, N.~Haim, G.~Yehudai, G.~Vardi, and M.~Irani.
\newblock Reconstructing training data from multiclass neural networks, 2023{\natexlab{a}}.

\bibitem[Buzaglo et~al.(2023{\natexlab{b}})Buzaglo, Haim, Yehudai, Vardi, Oz, Nikankin, and Irani]{buzaglo2023deconstructing}
G.~Buzaglo, N.~Haim, G.~Yehudai, G.~Vardi, Y.~Oz, Y.~Nikankin, and M.~Irani.
\newblock Deconstructing data reconstruction: Multiclass, weight decay and general losses.
\newblock \emph{Advances in Neural Information Processing Systems}, 36:\penalty0 51515--51535, 2023{\natexlab{b}}.

\bibitem[Carlini et~al.(2019)Carlini, Liu, {Ú}lfar Erlingsson, Kos, and Song]{carlini2019secret}
N.~Carlini, C.~Liu, {Ú}lfar Erlingsson, J.~Kos, and D.~Song.
\newblock The secret sharer: Evaluating and testing unintended memorization in neural networks.
\newblock In \emph{28th USENIX Security Symposium (USENIX Security 19)}, pages 267--284, Santa Clara, CA, Aug. 2019. USENIX Association.

\bibitem[Carlini et~al.(2021)Carlini, Tram{\`e}r, Wallace, Jagielski, Herbert-Voss, Lee, Roberts, Brown, Song, {Ú}lfar Erlingsson, Oprea, and Raffel]{carlini2021extracting}
N.~Carlini, F.~Tram{\`e}r, E.~Wallace, M.~Jagielski, A.~Herbert-Voss, K.~Lee, A.~Roberts, T.~Brown, D.~Song, {Ú}lfar Erlingsson, A.~Oprea, and C.~Raffel.
\newblock Extracting training data from large language models.
\newblock In \emph{30th USENIX Security Symposium (USENIX Security 21)}, pages 2633--2650. USENIX Association, Aug. 2021.
\newblock ISBN 978-1-939133-24-3.

\bibitem[Carlini et~al.(2023)Carlini, Hayes, Nasr, Jagielski, Sehwag, Tramèr, Balle, Ippolito, and Wallace]{carlini2023extracting}
N.~Carlini, J.~Hayes, M.~Nasr, M.~Jagielski, V.~Sehwag, F.~Tramèr, B.~Balle, D.~Ippolito, and E.~Wallace.
\newblock Extracting training data from diffusion models, 2023.

\bibitem[Chaudhuri et~al.(2011)Chaudhuri, Monteleoni, and Sarwate]{Chaudhuri2011}
K.~Chaudhuri, C.~Monteleoni, and A.~D. Sarwate.
\newblock Differentially private empirical risk minimization.
\newblock \emph{Journal of Machine Learning Research}, 12:\penalty0 1069--1109, 2011.

\bibitem[Dwork and Roth(2014)]{DworkRoth2014}
C.~Dwork and A.~Roth.
\newblock \emph{The Algorithmic Foundations of Differential Privacy}, volume~9 of \emph{Foundations and Trends in Theoretical Computer Science}.
\newblock Now Publishers, 2014.
\newblock \doi{10.1561/0400000042}.

\bibitem[Dwork et~al.(2006)Dwork, McSherry, Nissim, and Smith]{Original_DP_paper}
C.~Dwork, F.~McSherry, K.~Nissim, and A.~Smith.
\newblock Calibrating noise to sensitivity in private data analysis.
\newblock In \emph{Proceedings of the Third Conference on Theory of Cryptography}, TCC'06, page 265–284, Berlin, Heidelberg, 2006. Springer-Verlag.
\newblock ISBN 3540327312.

\bibitem[Elhamifar and Vidal(2013)]{elhamifar2013sparse}
E.~Elhamifar and R.~Vidal.
\newblock Sparse subspace clustering: Algorithm, theory, and applications.
\newblock \emph{IEEE transactions on pattern analysis and machine intelligence}, 35\penalty0 (11):\penalty0 2765--2781, 2013.

\bibitem[Fang et~al.(2024)Fang, Qiu, Yu, Yu, Kong, Chong, Chen, Wang, Xia, and Xu]{fang2024privacy}
H.~Fang, Y.~Qiu, H.~Yu, W.~Yu, J.~Kong, B.~Chong, B.~Chen, X.~Wang, S.-T. Xia, and K.~Xu.
\newblock Privacy leakage on dnns: A survey of model inversion attacks and defenses.
\newblock \emph{arXiv preprint arXiv:2402.04013}, 2024.

\bibitem[Geiping et~al.(2020)Geiping, Bauermeister, Dr{\"o}ge, and Moeller]{geiping2020inverting}
J.~Geiping, H.~Bauermeister, H.~Dr{\"o}ge, and M.~Moeller.
\newblock Inverting gradients-how easy is it to break privacy in federated learning?
\newblock \emph{Advances in Neural Information Processing Systems}, 33:\penalty0 16937--16947, 2020.

\bibitem[Haim et~al.(2022)Haim, Vardi, Yehudai, Shamir, and Irani]{haim2022reconstructing}
N.~Haim, G.~Vardi, G.~Yehudai, O.~Shamir, and M.~Irani.
\newblock Reconstructing training data from trained neural networks.
\newblock \emph{NeurIPS}, 2022.

\bibitem[He et~al.(2019)He, Zhang, and Lee]{he2019model}
Z.~He, T.~Zhang, and R.~B. Lee.
\newblock Model inversion attacks against collaborative inference.
\newblock In \emph{Proceedings of the 35th Annual Computer Security Applications Conference}, pages 148--162, 2019.

\bibitem[Hitaj et~al.(2017)Hitaj, Ateniese, and Perez-Cruz]{hitaj2017deep}
B.~Hitaj, G.~Ateniese, and F.~Perez-Cruz.
\newblock Deep models under the gan: information leakage from collaborative deep learning.
\newblock In \emph{Proceedings of the 2017 ACM SIGSAC conference on computer and communications security}, pages 603--618, 2017.

\bibitem[Huang et~al.(2021)Huang, Gupta, Song, Li, and Arora]{huang2021evaluating}
Y.~Huang, S.~Gupta, Z.~Song, K.~Li, and S.~Arora.
\newblock Evaluating gradient inversion attacks and defenses in federated learning.
\newblock \emph{Advances in Neural Information Processing Systems}, 34:\penalty0 7232--7241, 2021.

\bibitem[Ji and Telgarsky(2020)]{KKT2020}
Z.~Ji and M.~Telgarsky.
\newblock Directional convergence and alignment in deep learning.
\newblock In H.~Larochelle, M.~Ranzato, R.~Hadsell, M.~Balcan, and H.~Lin, editors, \emph{Advances in Neural Information Processing Systems}, volume~33, pages 17176--17186. Curran Associates, Inc., 2020.

\bibitem[Kairouz et~al.(2015)Kairouz, Oh, and Viswanath]{Kairouz2015Composition}
P.~Kairouz, S.~Oh, and P.~Viswanath.
\newblock The composition theorem for differential privacy.
\newblock In \emph{ICML}, pages 1376--1385, 2015.

\bibitem[Loo et~al.(2023)Loo, Hasani, Lechner, Amini, and Rus]{loo2023understanding}
N.~Loo, R.~Hasani, M.~Lechner, A.~Amini, and D.~Rus.
\newblock Understanding reconstruction attacks with the neural tangent kernel and dataset distillation, 2023.

\bibitem[Lyu and Li(2020)]{Lyu2020Gradient}
K.~Lyu and J.~Li.
\newblock Gradient descent maximizes the margin of homogeneous neural networks.
\newblock In \emph{International Conference on Learning Representations}, 2020.

\bibitem[Nasr et~al.(2023)Nasr, Carlini, Hayase, Jagielski, Cooper, Ippolito, Choquette-Choo, Wallace, Tram{\`e}r, and Lee]{nasr2023scalable}
M.~Nasr, N.~Carlini, J.~Hayase, M.~Jagielski, A.~F. Cooper, D.~Ippolito, C.~A. Choquette-Choo, E.~Wallace, F.~Tram{\`e}r, and K.~Lee.
\newblock Scalable extraction of training data from (production) language models.
\newblock \emph{arXiv preprint arXiv:2311.17035}, 2023.

\bibitem[Refael et~al.(2024)Refael, Svirsky, Shustin, Huleihel, and Lindenbaum]{refael2025adarankgrad}
Y.~Refael, J.~Svirsky, B.~Shustin, W.~Huleihel, and O.~Lindenbaum.
\newblock Adarankgrad: Adaptive gradient-rank and moments for memory-efficient llms training and fine-tuning, 2024.
\newblock URL \url{https://arxiv.org/abs/2410.17881}.

\bibitem[Refael et~al.(2025)Refael, Smorodinsky, Tirer, and Lindenbaum]{anonymous2025sumo}
Y.~Refael, G.~Smorodinsky, T.~Tirer, and O.~Lindenbaum.
\newblock Sumo: Subspace-aware moment-orthogonalization for accelerating memory-efficient llm training, 2025.
\newblock URL \url{https://arxiv.org/abs/2505.24749}.

\bibitem[Runkel et~al.(2024)Runkel, Gandikota, Geiping, Schönlieb, and Moeller]{runkel2024}
C.~Runkel, K.~V. Gandikota, J.~Geiping, C.-B. Schönlieb, and M.~Moeller.
\newblock Training data reconstruction: Privacy due to uncertainty?, 2024.
\newblock URL \url{https://arxiv.org/abs/2412.08544}.

\bibitem[Smorodinsky et~al.(2025)Smorodinsky, Vardi, and Safran]{smorodinsky2025provableprivacyattackstrained}
G.~Smorodinsky, G.~Vardi, and I.~Safran.
\newblock Provable privacy attacks on trained shallow neural networks, 2025.

\bibitem[Somepalli et~al.(2022)Somepalli, Singla, Goldblum, Geiping, and Goldstein]{somepalli2022diffusion}
G.~Somepalli, V.~Singla, M.~Goldblum, J.~Geiping, and T.~Goldstein.
\newblock Diffusion art or digital forgery? investigating data replication in diffusion models.
\newblock \emph{arXiv preprint arXiv:2212.03860}, 2022.

\bibitem[Tan et~al.(2024)Tan, Li, Zhao, Liu, Guo, and Xu]{tan2024defending}
Q.~Tan, Q.~Li, Y.~Zhao, Z.~Liu, X.~Guo, and K.~Xu.
\newblock Defending against data reconstruction attacks in federated learning: An information theory approach.
\newblock In \emph{33rd USENIX Security Symposium (USENIX Security 24)}, pages 325--342, 2024.

\bibitem[Vapnik(1995)]{vapnik1995support}
V.~Vapnik.
\newblock Support-vector networks.
\newblock \emph{Machine learning}, 20:\penalty0 273--297, 1995.

\bibitem[Vaswani et~al.(2023)Vaswani, Shazeer, Parmar, Uszkoreit, Jones, Gomez, Kaiser, and Polosukhin]{vaswani2023attentionneed}
A.~Vaswani, N.~Shazeer, N.~Parmar, J.~Uszkoreit, L.~Jones, A.~N. Gomez, L.~Kaiser, and I.~Polosukhin.
\newblock Attention is all you need, 2023.
\newblock URL \url{https://arxiv.org/abs/1706.03762}.

\bibitem[Wen et~al.(2022)Wen, Geiping, Fowl, Goldblum, and Goldstein]{wen2022fishing}
Y.~Wen, J.~Geiping, L.~Fowl, M.~Goldblum, and T.~Goldstein.
\newblock Fishing for user data in large-batch federated learning via gradient magnification.
\newblock \emph{arXiv preprint arXiv:2202.00580}, 2022.

\bibitem[Zhao et~al.(2024)Zhao, Zhang, Chen, Wang, Anandkumar, and Tian]{zhao2024galorememoryefficientllmtraining}
J.~Zhao, Z.~Zhang, B.~Chen, Z.~Wang, A.~Anandkumar, and Y.~Tian.
\newblock Galore: Memory-efficient llm training by gradient low-rank projection, 2024.
\newblock URL \url{https://arxiv.org/abs/2403.03507}.

\bibitem[Zhu et~al.(2019)Zhu, Liu, and Han]{zhu2019deep}
L.~Zhu, Z.~Liu, and S.~Han.
\newblock Deep leakage from gradients.
\newblock \emph{Advances in Neural Information Processing Systems}, 32, 2019.

\end{thebibliography}
\bibliographystyle{abbrvnat}
\clearpage
\appendix

% \section{Comprehensive 

\section{Proofs for Subsection~\ref{sec:exact_kkt}}\label{sec:exact_kkt_proofs}
In this section, we present all the proofs for Subsection~\ref{sec:exact_kkt}. We begin with a few additional notations that will be used throughout the appendix.

Given a neuron with weights $\bw$ and bias $b$, define the shorthand \(c_j(\bx)\coloneqq \relu{\langle\bw_j,\bx\rangle+b_j}\).
Let $\sigma\prime_j$ denote a subgradient of $\relu{\bw_j^\top \bx+b_j}$; for sample \(\bx_i\), write \(\sigma'_{i,j}\) for the subgradient of $\relu{\bw_j^\top \bx_i+b_j}$.

\subsection{Proofs of Lemma~\ref{lma:unite_points_bias} and Lemma~\ref{lma:split_points_bias}}

We begin with proving an auxiliary lemma about the convex nature of the gradient of the neural network, and then we prove Lemmas~\ref{lma:unite_points_bias}~and~\ref{lma:split_points_bias}.

\begin{lemma}\label{lma:gradient-is-convex}
    Let $\Phi(\btheta; \bx)$ be a neural network with piecewise-linear activations. Suppose that the point $\alpha\bx_1 + (1-\alpha)\bx_2)$ is at the same activation pattern as $\bx_1$ and $\bx_2$. Then, $\nabla_{\btheta}\Phi(\btheta; \bx)$ is convex with respect to $\bx$:
    \[
        \nabla_{\btheta}\Phi(\btheta; \alpha\bx_1 + (1-\alpha)\bx_2) = \alpha \nabla_{\btheta}\Phi(\btheta; \bx_1) + (1-\alpha)\nabla_{\btheta}\Phi(\btheta;\bx_2).
    \]
\end{lemma}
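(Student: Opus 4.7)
The plan is to verify the stated identity directly, by expanding $\nabla_\btheta \Phi$ block-by-block across neurons and parameter types, and using the shared-activation-pattern hypothesis to reduce each block to an affine function of the input. Despite the lemma being phrased as a convexity claim, the displayed equation is an equality, so what we actually establish is the stronger fact that $\bx \mapsto \nabla_\btheta \Phi(\btheta; \bx)$ is \emph{affine} on any region where the activation pattern is constant.

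First, I would write out the gradient of the 2-layer ReLU network $\Phi(\btheta; \bx) = \sum_{j=1}^k v_j \relu{\bw_j^\top \bx + b_j}$ with respect to each parameter block: $\partial \Phi / \partial v_j = \relu{\bw_j^\top \bx + b_j}$, $\partial \Phi / \partial \bw_j = v_j\, \sigma'_j(\bx)\, \bx$, and $\partial \Phi / \partial b_j = v_j\, \sigma'_j(\bx)$, where $\sigma'_j(\bx) \in \{0,1\}$ denotes the subgradient of the ReLU at $\bw_j^\top \bx + b_j$. The assumption that $\bx_1$, $\bx_2$ and $\bx_\alpha \coloneqq \alpha\bx_1 + (1-\alpha)\bx_2$ share an activation pattern means that for every $j$ the value $\sigma'_j$ coincides at all three inputs; denote this common value by $\sigma'_j$.

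The component-wise check is then routine. For the $v_j$ block, on the active side $\relu{\bw_j^\top \bx + b_j} = \bw_j^\top \bx + b_j$ is affine in $\bx$, so the identity is immediate; on the inactive side all three terms vanish. For the $b_j$ block, both sides reduce to $v_j \sigma'_j$, which is constant in $\bx$. For the $\bw_j$ block, both sides equal $v_j \sigma'_j \cdot \bx$, and linearity in $\bx$ closes the argument. Summing over $j$ assembles the per-block equalities into the full identity $\nabla_\btheta \Phi(\btheta; \bx_\alpha) = \alpha \nabla_\btheta \Phi(\btheta; \bx_1) + (1-\alpha) \nabla_\btheta \Phi(\btheta; \bx_2)$.

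The only mild obstacle is the set-valued nature of the subgradient at the ReLU kink $\bw_j^\top \bx + b_j = 0$. The shared-activation-pattern hypothesis is precisely what lets us pick a single representative $\sigma'_j$ consistent with all three inputs, after which the verification is purely algebraic. I would close by noting that the same argument extends verbatim to any piecewise-linear activation, since on a common linear piece the activation is affine in its preactivation and its subderivative is locally constant, which is all that the calculation above actually uses.
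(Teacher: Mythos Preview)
Your argument is correct and rests on the same core observation as the paper: within a fixed activation region, $\bx \mapsto \nabla_\btheta \Phi(\btheta; \bx)$ is affine, and the displayed identity is just the defining property of an affine map. The only difference is one of presentation: the paper asserts affineness directly, writes $\nabla_\btheta \Phi(\btheta; \bx) = \bba \bx + \bc$ for some matrix $\bba$ and vector $\bc$ valid on the common activation region, and verifies the identity in one line, whereas you unpack the gradient block-by-block ($v_j$, $\bw_j$, $b_j$) for the 2-layer architecture and check each block by hand before reassembling.
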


\begin{proof}
    Let $\Phi(\btheta; \bx)$ be a neural network with ReLU activations (the argument is the same for other piecewise linear functions), then since the gradient $\nabla_{\btheta}\Phi(\btheta; \bx)$ is with respect to $\btheta$, it remains an affine function in $\bx$ at each activation pattern, and can thus be written as $\nabla_{\btheta}\Phi(\btheta; \bx) = \bba \bx + \bc$ for some matrix $\bba$ and vector $\bc$. We therefore have
    \begin{align*}
        \alpha \nabla_{\btheta}\Phi(\btheta; \bx_1) + (1-\alpha)\nabla_{\btheta}\Phi(\btheta; \bx_2) &= \alpha(\bba \bx_1 + \bc) + (1-\alpha) (\bba\bx_2 + \bc) \\
        &= \bba(\alpha \bx_1 + (1-\alpha) \bx_2) + \bc \\
        &= \nabla_{\btheta}\Phi(\btheta;\alpha \bx_1 + (1-\alpha)\bx_2).
    \end{align*}
\end{proof}

We now turn to prove the lemmas.

\begin{proof}[Proof of Lemma~\ref{lma:unite_points_bias}]
    Denote $\lambda^\prime = \lambda_1 + \lambda_2$ and $\alpha = \frac{\lambda_1}{\lambda_1 + \lambda_2}$. It is easy to see that $\alpha \in (0,1)$ and that $\lambda^\prime > 0$, which proves that Equations~\ref{eq:positive_lambda}~and~\ref{eq:zero_lam} hold. Moreover, since $\bx_1$ and $\bx_2$ have the same activation pattern, we have that $\bx_{1.5}$ also has the same activation pattern since it is a convex combination of the two, which proves that \Eqref{eq:margin} holds. To show that \Eqref{eq:stationary} holds, we compute
    \begin{align*}
        \btheta &= \sum_{i=1}^n \lambda_i y_i \nabla_{\btheta}\Phi(\btheta; \bx_i) \\
        &= \sum_{i \neq 1,2}\lambda_i y_i \nabla_{\btheta}\Phi(\btheta; \bx_i) + \lambda_1 y_1 \nabla_{\btheta}\Phi(\btheta; \bx_1) + \lambda_2 y_2 \nabla_{\btheta}\Phi(\btheta; \bx_2) \\
        &\stackrel{*}{=} \sum_{i \neq 1,2}\lambda_i y_i \nabla_{\btheta}\Phi(\btheta; \bx_i) + \lambda_1 y_{1.5} \nabla_{\btheta}\Phi(\btheta; \bx_1) + \lambda_2 y_{1.5} \nabla_{\btheta}\Phi(\btheta; \bx_2) \\
        &\stackrel{**}{=} \sum_{i \neq 1,2}\lambda_i y_i \nabla_{\btheta}\Phi(\btheta; \bx_i) + y_{1.5} \nabla_{\btheta}\Phi(\btheta; \lambda_1 \bx_1 + \lambda_2 \bx_2)\\
        &= \sum_{i \neq 1,2}\lambda_i y_i \nabla_{\btheta}\Phi(\btheta; \bx_i) + y_{1.5} \nabla_{\btheta}\Phi\left(\btheta; (\lambda_1 + \lambda_2)\left(\frac{\lambda_1}{\lambda_1+\lambda_2} \bx_1 + \frac{\lambda_2}{\lambda_1 + \lambda_2} \bx_2\right)\right)\\
        &= \sum_{i \neq 1,2}\lambda_i y_i \nabla_{\btheta}\Phi(\btheta; \bx_i) + y_{1.5} \nabla_{\btheta}\Phi(\btheta; \lambda^\prime(\alpha \bx_1 + (1-\alpha) \bx_2))\\
        &= \sum_{i \neq 1,2}\lambda_i y_i \nabla_{\btheta}\Phi(\btheta; \bx_i) + y_{1.5} \lambda^\prime \nabla_{\btheta}\Phi(\btheta; \bx_{1.5}),
    \end{align*}
    where $*$ is by the assumption that $y_1 = y_2 = y_{1.5}$, and $**$ is by Lemma~\ref{lma:gradient-is-convex}.
\end{proof}

\begin{proof}[Proof of Lemma~\ref{lma:split_points_bias}]
    Denote $\lambda_\alpha = \frac{\alpha\lambda_1}{\alpha + \beta},~ \lambda_\beta = \frac{\beta \lambda_1}{\alpha + \beta}$. It is easy to see that $\lambda_\alpha, \lambda_\beta > 0$, which proves that Equations~\ref{eq:positive_lambda}~and~\ref{eq:zero_lam} hold. Moreover, since by assumption $\bz_1$ and $\bz_2$ are split such that they have the same activation pattern as $\bx_{1}$, this assures that \Eqref{eq:margin} holds. 
    %Since $\Phi(\btheta; \bx)$ is piece-wise linear in $\bx$, $\nabla_{\btheta}\Phi(\btheta; \bx)$ is affine in $\bx$ inside each activation pattern, and can be written as $\nabla_{\btheta}\Phi(\btheta; \bx) = \bba \bx + \bc$ for some matrix $\bba$ and vector $\bc$. 
    Now, observe that
    \begin{enumerate}
        \item
        $\lambda_\beta + \lambda_\alpha = \frac{\lambda_1 \beta}{\alpha + \beta} + \frac{\lambda_1\alpha}{\alpha+\beta} = \lambda_1$,
        \item 
        $\lambda_\beta(\bx_1 + \alpha\bnu) + \lambda_\alpha(\bx_1 - \beta\bnu) = (\lambda_\beta + \lambda_\alpha)\bx_1 + \frac{\lambda_1 \beta}{\alpha+\beta} \alpha\bnu - \frac{\lambda_1\alpha}{\alpha+\beta}\beta\bnu = \lambda_1 \bx_1$.
    \end{enumerate}
    Then, by the above, we have
    \begin{align}
        \lambda_\beta\nabla_{\btheta}\Phi(\btheta; \bz_1) + \lambda_\alpha\nabla_{\btheta}\Phi(\btheta; \bz_2) &= \lambda_\beta(\bba(\bx_1 + \alpha\bnu) + \bc) + \lambda_\alpha(\bba(\bx_1 - \beta\bnu) + \bc) \nonumber\\
        &= (\lambda_\beta + \lambda_\alpha)\bba\bx_1 + (\lambda_\alpha + \lambda_\beta)\bc \nonumber\\
        &= \lambda_1(\bba\bx_1 + \bc) \nonumber\\
        &= \lambda_1\nabla_{\btheta}\Phi(\btheta; \bx_1). \label{eq:merged_lambdas}
    \end{align}
    Next, let us see that indeed \Eqref{eq:stationary} holds by computing
    \begin{align*}
        \btheta &= \sum_{i=1}^n \lambda_i y_i \nabla_{\btheta}\Phi(\btheta; \bx_i) \\
        &= \sum_{i \neq 1} \lambda_i y_i \nabla_{\btheta}\Phi(\btheta; \bx_i) + \lambda_1 y_1 \nabla_{\btheta}\Phi(\btheta; \bx_1) \\
        &\stackrel{*}{=} \sum_{i \neq 1} \lambda_i y_i \nabla_{\btheta}\Phi(\btheta; \bx_i) + y_1 \lambda_\beta\nabla_{\btheta}\Phi(\btheta; \bz_1) + y_1\lambda_\alpha\nabla_{\btheta}\Phi(\btheta; \bz_2),
    \end{align*}
    where $*$ is by \Eqref{eq:merged_lambdas}. Thus, the set $S^\prime$ is a KKT set.
\end{proof}

\subsection{Proofs of Theorem~\ref{thm:subspace} and Theorem~\ref{thm:kkt_maximal_distance}}

We begin with proving the following auxiliary lemmas. Lemma~\ref{lem:kkt_derivatives} provides applicable technical constraints that any KKT network must satisfy, and Lemma~\ref{lma:orthogonal_on_the_margin} shows that we can split the training samples in a single direction, and that all points in the new KKT set will remain on the margin. Lemma~\ref{lma:orthogonal_same_activation_pattern} assures that we can split along this direction as far as we want and remain in the same activation pattern as the original sample.

\begin{lemma}\label{lem:kkt_derivatives}
    Suppose a 2-layer, homogeneous ReLU neural network with parameters $\btheta$ that satisfies the KKT conditions in Equations~\ref{eq:stationary}-\ref{eq:zero_lam}. Then we have
    \[
        v_j = \sum_{i=1}^n\lambda_iy_i\relu{\bw_j^\top \bx_i+b_j},\quad
        \bw_j = v_j\sum_{i=1}^n\lambda_iy_i\bx_i\sigma'_{i,j},\quad
        b_j=v_j\sum_{i=1}^n\lambda_iy_i\sigma'_{i,j}.
    \]
\end{lemma}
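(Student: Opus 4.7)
The plan is to simply unpack the stationarity condition \Eqref{eq:stationary} component-by-component, for each of the three parameter groups $(\bw_j, v_j, b_j)$ that constitute $\btheta$ in the 2-layer ReLU network. Since $\Phi(\btheta;\bx)=\sum_{j=1}^k v_j \relu{\bw_j^\top\bx + b_j}$ is a sum over neurons, the partial derivative with respect to each parameter isolates a single neuron's contribution, so the three identities in the lemma are three independent instances of the same calculation.

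First I would compute the three partial derivatives of $\Phi(\btheta;\bx)$ at a point $\bx$: one gets $\partial \Phi / \partial v_j = \relu{\bw_j^\top\bx+b_j}$, $\partial \Phi/\partial \bw_j = v_j\, \sigma'(\bw_j^\top\bx+b_j)\, \bx$, and $\partial \Phi/\partial b_j = v_j\, \sigma'(\bw_j^\top\bx+b_j)$, where $\sigma'$ denotes any valid subgradient of the ReLU at the pre-activation (matching the subgradient choice fixed in $\sigma'_{i,j}$). The only subtle point is the non-differentiability of $\relu{\cdot}$ at zero, which is handled by the convention already set in the excerpt that $\sigma'_{i,j}$ denotes a chosen subgradient; the KKT stationarity condition is stated in the Clarke subdifferential sense used by \citet{Lyu2020Gradient}, so an appropriate subgradient selection makes the equality below hold.

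Next I would substitute these derivatives, evaluated at $\bx=\bx_i$, into the coordinate-wise form of \Eqref{eq:stationary}. Reading off the $v_j$-block yields directly
\[
v_j = \sum_{i=1}^n \lambda_i y_i\, \relu{\bw_j^\top \bx_i + b_j};
\]
reading off the $\bw_j$-block and pulling the neuron-independent factor $v_j$ outside the sum gives
\[
\bw_j = \sum_{i=1}^n \lambda_i y_i \,v_j\, \bx_i\, \sigma'_{i,j} = v_j \sum_{i=1}^n \lambda_i y_i\, \bx_i\, \sigma'_{i,j};
\]
and the $b_j$-block gives $b_j = v_j \sum_{i=1}^n \lambda_i y_i \sigma'_{i,j}$ by the same reasoning.

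There is no real obstacle here: the statement is essentially a rewriting of stationarity once the gradients of a 2-layer ReLU network are written out. The only care point is consistently using the chosen subgradient $\sigma'_{i,j}$ on the ReLU boundary; beyond that, the proof is a one-line substitution per identity.
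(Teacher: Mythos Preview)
Your proposal is correct and follows essentially the same approach as the paper: compute the three partial derivatives of $\Phi(\btheta;\bx)$ with respect to $v_j$, $\bw_j$, and $b_j$, note the subgradient convention at the ReLU kink, and then read off the three identities coordinate-wise from the stationarity condition \Eqref{eq:stationary}. The paper's proof is identical in structure and content.
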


\begin{proof}
    We compute the derivatives with respect to $\btheta$ as follows
    \begin{equation*}
       \frac{\partial}{\partial v_j}\Phi(\btheta;\bx) = \relu{\bw_j^\top \bx+b_j},\quad
       \frac{\partial}{\partial \bw_j}\Phi(\btheta;\bx) = v_jx\sigma'_j,\quad
       \frac{\partial}{\partial b_j}\Phi(\btheta;\bx) = v_j\sigma'_j.
    \end{equation*}
    If $\bw_j^\top \bx + b_j \neq 0$ then $\sigma'_{j}$ is well defined, and if $\bw_j^\top \bx + b_j = 0$ then $\sigma'_{j} \in [0,1]$. In any case, it holds that $\sigma'_{j} \geq 0$. Combining the above partial derivatives with \Eqref{eq:stationary}, we obtain
    \[
       v_j = \sum_{i=1}^n\lambda_iy_i\relu{\bw_j^\top \bx_i+b_j},\quad
       \bw_j = v_j\sum_{i=1}^n\lambda_iy_i\bx_i\sigma'_{i,j},\quad
       b_j=v_j\sum_{i=1}^n\lambda_iy_i\sigma'_{i,j}
    \]
    for all $j\in[k]$, as required.
\end{proof}

\begin{lemma}\label{lma:orthogonal_on_the_margin}
    Let $S = \{\bx_1,\dots,\bx_n\}$ be a KKT set such that all $\bx_i \in S$ are on the margin, let $p$ be the margin's value, and let $\hat{\bx}$ be a vector that is orthogonal to all $\bx_i \in S$. Then, for all $\beta \in \real$ and for all $l \in [n]$ we have that $|\Phi(\btheta; \bx_l + \beta\hat{\bx})| = p$.
\end{lemma}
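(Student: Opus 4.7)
The plan is to reduce the statement to the observation that the first-layer preactivations of $\Phi$ are invariant to perturbations of the input by any vector orthogonal to the training set. The key ingredient is the stationarity characterization provided by Lemma~\ref{lem:kkt_derivatives}, which, for a homogeneous 2-layer ReLU KKT network, gives
\[
    \bw_j \;=\; v_j\sum_{i=1}^n \lambda_i y_i\, \bx_i\, \sigma'_{i,j}.
\]
The crucial structural consequence is that $\bw_j \in \spn\{\bx_1,\dots,\bx_n\}$ for every neuron $j\in[k]$.

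Given that, I would simply expand the preactivation at $\bx_l + \beta \hat\bx$: since $\hat\bx$ is orthogonal to all training samples and each $\bw_j$ lies in their span, $\langle \bw_j, \hat\bx\rangle = 0$, and therefore
\[
    \bw_j^\top(\bx_l + \beta \hat\bx) + b_j \;=\; \bw_j^\top \bx_l + b_j
\]
for every $j$ and every $\beta \in \real$. Applying $\relu{\cdot}$ and summing against the $v_j$'s yields $\Phi(\btheta; \bx_l + \beta \hat\bx) = \Phi(\btheta; \bx_l)$. Since $\bx_l$ is on the margin by assumption, $|\Phi(\btheta; \bx_l)| = p$, which gives the claim.

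There is essentially no real obstacle here beyond invoking Lemma~\ref{lem:kkt_derivatives}. The only subtlety worth noting is that the identity for $\bw_j$ holds even at kinks of the ReLU, where $\sigma'_{i,j}$ is defined as some element of the subdifferential $[0,1]$, so the argument does not require smoothness of the activation on the training set. The rest is an immediate linear-algebra calculation, and in fact yields the stronger conclusion that $\Phi(\btheta; \bx_l + \beta \hat\bx)$ equals (not just matches in magnitude) $\Phi(\btheta;\bx_l)$ for every $\beta$, which is what is needed downstream in the proof of Theorem~\ref{thm:subspace}.
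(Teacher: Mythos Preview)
Your proposal is correct and follows essentially the same approach as the paper: both invoke Lemma~\ref{lem:kkt_derivatives} to see that each $\bw_j$ lies in $\spn\{\bx_1,\dots,\bx_n\}$, hence $\langle \bw_j,\hat\bx\rangle=0$, so the preactivations (and therefore the network output) are unchanged by the perturbation $\beta\hat\bx$. The paper carries out the computation explicitly inside the sum, while you phrase it slightly more conceptually, but the argument is identical.
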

\begin{proof}
    Compute
    \begin{align*}
    |\Phi(\btheta;\bx_l + \beta\hat{\bx})| &=\left|\sum_{j=1}^k v_j \relu{\langle \bw_j, \bx_l + \beta\hat{\bx} \rangle + b_j} \right|\stackrel{*}{=} \left|\sum_{j=1}^kv_j \relu{v_j \sum_{i=1}^n \lambda_i y_i \sigma^\prime_{i,j} \langle\bx_i, \bx_l + \beta\hat{\bx} \rangle + b_j}\right| \\
    &\stackrel{**}{=}\left|\sum_{j=1}^kv_j \relu{v_j \sum_{i=1}^n \lambda_i y_i \sigma^\prime_{i,j} \langle\bx_i, \bx_l\rangle + b_j} \right| = \left|\sum_{j=1}^kv_j \relu{\left\langle v_j \sum_{i=1}^n \lambda_i y_i \sigma^\prime_{i,j}\bx_i, \bx_l \right\rangle + b_j} \right|\\
    &= \left| \sum_{j=1}^k v_j \relu{\langle \bw_j, \bx_l \rangle + b_j} \right| =|\Phi(\btheta; \bx_l)|=p.
\end{align*}
Where $*$ is due to Lemma~\ref{lem:kkt_derivatives}, and $**$ is due to the fact that $\hat{\bx}$ is orthogonal to all $\bx_i$.
\end{proof}

\begin{lemma}\label{lma:orthogonal_same_activation_pattern}
    Let $S = \{\bx_1,\dots,\bx_n\}$ be a KKT set and let $\hat{\bx}$ be a vector that is orthogonal to all $\bx_i \in S$. Then, $c_j(\bx_l + \beta\hat{\bx}) = c_j(\bx_l)$ for all $l \in [n]$ and for all $\beta \in \real$.
\end{lemma}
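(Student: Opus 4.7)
The plan is to reduce the claim to the observation that each weight vector $\bw_j$ lies in the linear span of the training data $\bx_1,\dots,\bx_n$, so any vector orthogonal to all $\bx_i$ is automatically orthogonal to every $\bw_j$. Since $\hat{\bx}$ perturbs $\bx_l$ only in a direction that no neuron ``sees'', the pre-activations are preserved, and hence so is the ReLU output.

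More concretely, first I would invoke Lemma~\ref{lem:kkt_derivatives}, which gives the closed-form expression
\[
\bw_j \;=\; v_j \sum_{i=1}^{n} \lambda_i y_i \sigma'_{i,j}\,\bx_i
\]
for every $j\in[k]$. This exhibits $\bw_j$ as an element of $\spn\{\bx_1,\dots,\bx_n\}$. Next, using the assumption that $\hat{\bx}$ is orthogonal to every $\bx_i$, linearity of the inner product yields $\langle \bw_j, \hat{\bx}\rangle = 0$, and therefore
\[
\langle \bw_j, \bx_l + \beta\hat{\bx}\rangle + b_j \;=\; \langle \bw_j, \bx_l\rangle + \beta\langle \bw_j,\hat{\bx}\rangle + b_j \;=\; \langle \bw_j,\bx_l\rangle + b_j,
\]
for every $\beta\in\real$ and every $l\in[n]$. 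Applying $\relu{\cdot}$ to both sides gives $c_j(\bx_l+\beta\hat{\bx})=c_j(\bx_l)$, as required.

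There is no real obstacle here; the lemma is essentially a direct corollary of the KKT stationarity condition for $\bw_j$ in Lemma~\ref{lem:kkt_derivatives}. The only subtlety worth flagging is that the closed-form for $\bw_j$ depends on a fixed choice of subgradients $\sigma'_{i,j}$ at points where $\langle \bw_j,\bx_i\rangle+b_j=0$; but since the KKT conditions are assumed to hold with some such choice, this ambiguity does not affect the argument, as only the span membership of $\bw_j$ is used.
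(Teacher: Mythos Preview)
Your proposal is correct and follows essentially the same approach as the paper: both arguments invoke Lemma~\ref{lem:kkt_derivatives} to express $\bw_j$ as a linear combination of the $\bx_i$'s, from which $\langle \bw_j,\hat{\bx}\rangle=0$ follows, and hence the pre-activation (and ReLU output) is unchanged. The paper substitutes the expression for $\bw_j$ directly inside the inner product, whereas you phrase it as $\bw_j\in\spn\{\bx_i\}$, but the underlying idea is identical.
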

\begin{proof}
    Compute
    \begin{align*}
    c_j(\bx_l + \beta \hat{\bx}) &= \relu{\langle \bw_j, \bx_l + \beta\hat{\bx} \rangle + b_j} \stackrel{*}{=} \relu{v_j \sum_{i=1}^n \lambda_i y_i \sigma^\prime_{i,j} \langle \bx_i, \bx_l + \beta\hat{\bx} \rangle + b_j} \\
    &\stackrel{**}{=} \relu{v_j \sum_{i=1}^n \lambda_i y_i \sigma^\prime_{i,j} \langle \bx_i,\bx_l\rangle + b_j} \stackrel{***}{=} \relu{\langle \bw_j, \bx_l \rangle + b_j} = c_j(\bx_l),
\end{align*}
where $*$ and $***$ use Lemma~\ref{lem:kkt_derivatives}, and $**$ uses that fact that $\hat{\bx}$ is orthogonal to all $\bx_i$.
\end{proof}

Using these two lemmas, we can now prove Theorem~\ref{thm:subspace}.
\begin{proof}[Proof of Theorem~\ref{thm:subspace}]
    Let $S = \{\bx_1, \dots, \bx_n\}$ be a KKT set. Since $\spn \{\bx_1,\dots,\bx_n\} \subsetneq \real^d$, there exists a vector $\hat{\bx} \in \real^d$ that is orthogonal to all $\bx_l \in S$. Since $\hat{\bx}$ is a directional vector, we can assume without loss of generality that $\| \hat{\bx} \| = 1$. For each $\bx_l \in S$ we define two new points $\bx_{l_1} = \bx_l + \alpha_l \hat{\bx}$ and $\bx_{l_2} = \bx_l - \beta_l \hat{\bx}$ for some $\alpha_l, \beta_l > 0$. We need to show that the set $S^\prime = \bigcup_{l=1}^n \{\bx_{l_1}, \bx_{l_2}\}$ is a KKT set.
    Using Lemma~\ref{lma:split_points_bias} (which we can use since we know that $\bx_l$ and $\{\bx_{l_1}, \bx_{l_2}\}$ have the same activation pattern from Lemma~\ref{lma:orthogonal_same_activation_pattern}, and also the same classification by assumption), we can replace each instance $\bx_l \in S$ with $\bx_{l_1}, \bx_{l_2}$ iteratively until we get $S^\prime$. This proves that $S^\prime$ is a KKT set. Moreover, by Lemma~\ref{lma:orthogonal_on_the_margin} we have that all points in $S^\prime$ are on the margin. Lastly, we need to prove that for any distance $\tau > 0$ we can chose $\alpha_l, \beta_l$ such that $d(S, S^\prime) > \tau$. Let us compute the distance of some $\bx_{l_1} = \bx_l + \alpha_l \hat{\bx}$ from all points $\bx_i \in S$ as follows
    \begin{align*}
        \|\bx_{l_1} - \bx_i\|_2^2 &= \|\bx_{l_1}\|^2 - 2 \langle \bx_{l_1}, \bx_i \rangle + \|\bx_i\|^2 \\ 
        &= \|\bx_l + \alpha_l \hat{\bx}\|^2 - 2 \langle \bx_l + \alpha_l \hat{\bx}, \bx_i \rangle + \|\bx_i\|^2 \\
        &= \|\bx_l + \alpha_l \hat{\bx}\|^2 - 2 \langle \bx_l, \bx_i \rangle + \|\bx_i\|^2\\
        &= \| \bx_l \|^2 + 2 \alpha_l \langle \bx_l, \hat{\bx} \rangle + \alpha_l^2 \|\hat{\bx} \|^2 - 2 \langle \bx_l, \bx_i \rangle + \|\bx_i\|^2\\
        &= \| \bx_l \|^2 + \alpha_l^2 \|\hat{\bx} \|^2 - 2 \langle \bx_l, \bx_i \rangle + \|\bx_i\|^2\\
        &= \| \bx_l \|^2 + \alpha_l^2 - 2 \langle \bx_l, \bx_i \rangle + \|\bx_i\|^2.
    \end{align*}
    Since $\bx_l$ and $\bx_i$ are constants, then $\| \bx_l \|^2$, $2 \langle \bx_l, \bx_i \rangle$ and $\|\bx_i\|^2$ are also constants. The only parameter we can change is $\alpha_l$, so we can take $\alpha_l$ large enough to make sure that $d(\bx_{l_1}, \bx_i) > \tau$ for all $\bx_i \in S$. The same analysis also shows that this approach can be applied iteratively to all remaining instances. Note that we can take $\alpha_l$ and $\beta_l$ to be as large as we want since the new points $\bx_{l_1}$ and $\bx_{l_2}$ will still have the same activation pattern as $\bx_l$ (for all $\bx_l \in S$) by virtue of Lemma~\ref{lma:orthogonal_same_activation_pattern}. We can thus bound the distance between $S$ and $S^\prime)$ by
    \begin{align*}
        d(S,S^\prime) = \min_{\bx \in S, \bx^\prime \in S^\prime} \|\bx - \bx^\prime\| > \tau.
    \end{align*}
\end{proof}

We now turn to proving Theorem~\ref{thm:kkt_maximal_distance}.
\begin{proof}[Proof of Theorem~\ref{thm:kkt_maximal_distance}]
    Let $\bx_l \in S$ and let $\bx_l + \alpha\bnu$ and $\bx_l - \beta \bnu$ be the new points resulted from splitting $\bx_l$. Both points have to remain in the same activation pattern as $\bx_l$, meaning that for each neuron $c_j(\bx)$, we need to make sure that $\sign(c_j(\bx_l)) = \sign(c_j(\bx_l + \alpha\bnu)) = \sign(c_j(\bx_l - \beta\bnu))$. Namely, that we do not change the ReLU activation of any neuron $\bw_j^\top \bx + b_j$. $\bx_l + \alpha \bnu$ changes an activation when $\langle \bw_j, \bx_l + \alpha \bnu \rangle + b_j = 0$, implying that
    \begin{align*}
        \langle \bw_j, \bx_l + \alpha \bnu \rangle + b_j  = 0 &\Rightarrow \langle\bw_j, \bx_l \rangle + b_j + \alpha \langle \bw_j, \bnu \rangle = 0 \\
        &\Rightarrow \alpha = - \frac{\langle \bw_j, \bx_l \rangle + b_j}{\langle \bw_j, \bnu \rangle}.
    \end{align*}
    Now we can upper bound the magnitude of $\alpha$ to see how far away $\bx_l + \alpha \bnu$ can be from $\bx_l$. Let us denote the signed distance between a point $\bx$ and the hyperplane $\langle \bw_j, \bx \rangle + b_j = 0$ by $D_j(\bx) = \frac{\langle \bw_j, \bx \rangle + b_j}{\|\bw_j\|}$. We can rewrite $\alpha = -\frac{D_j(\bx_l) \|\bw_j\|}{\langle \bw_j, \bnu \rangle}$. Now let us bound $|\langle \bw_j, \bnu \rangle|$ as follows
    \begin{align*}
        |\langle \bw_j, \bnu \rangle| &= \left|\left \langle \sum_{i=1}^n \lambda_i y_i \sigma^\prime_{i,j} \bx_i, \bnu \right \rangle \right| \\
        &=\left|\sum_{i=1}^n \lambda_i y_i \sigma^\prime_{i,j} \langle\bx_i, \bnu \rangle \right| \\
        &\leq \sum_{i=1}^n \lambda_i |y_i| |\sigma^\prime_{i,j}| |\langle\bx_i, \bnu \rangle| \\
        &\leq \gamma \sum_{i=1}^n \lambda_i.
        \end{align*}
    The above implies that
    \[
        \alpha \geq \frac{|D_j(\bx_l)| \cdot \|\bw_j\|}{\gamma \sum_{i=1}^n \lambda_i} = \frac{|D_j(\bx_l)| \cdot \|\bw_j\|}{\sum_{i=1}^n \lambda_i} \cdot \frac{1}{\gamma}.
    \]
    This is a sufficient condition on all neurons $c_j$ to guarantee that we do not deviate from our current activation pattern. This means that any $\alpha$ not exceeding
    \[
        \min_{j \in [k]} \frac{|D_j(\bx_l)| \cdot \|\bw_j\|}{\sum_{i=1}^n \lambda_i} \cdot \frac{1}{\gamma}
    \]
    is a valid choice, where the exact same analysis gives the same bound for $\beta$.
\end{proof}

\section{Proofs for Subsection~\ref{sec:almost-kkt}}\label{sec:almost_kkt_proofs}
\subsection{Proofs of Lemma~\ref{lma:unite_points_bias_almost_kkt} and Lemma~\ref{lma:split_points_bias_almost_kkt}} \label{sec:almost_kkt_lemmas_proofs}

\begin{proof}[Proof of Lemma~\ref{lma:unite_points_bias_almost_kkt}]
    Define $\lambda^\prime \coloneqq \lambda_1 + \lambda_2$ and $\alpha \coloneqq \frac{\lambda_1}{\lambda_1 + \lambda_2}$, $\bx_{1.5} \coloneqq \alpha\bx_1 + (1-\alpha)\bx_2$. It is easy to see that $\alpha \in (0,1)$ and $\lambda^\prime > 0$, which proves that Condition~\ref{eq:almost_positive_lambdas} holds. Moreover, Condition~\ref{eq:delta_kkt} is also easily satisfied for a sufficiently large $\delta$, and Condition~\ref{eq:almost_margin} holds since the merged point $\bx_{1.5}$ is a convex combination of $\bx_1,\bx_2$ and thus has the same activation pattern and classification as them. To show Condition~\ref{eq:epsilon_kkt}, we compute
    \begin{align*}
        \sum_{i=1}^n \lambda_i y_i \nabla_{\btheta}\Phi(\btheta; \bx_i) &= \sum_{i \neq 1,2}\lambda_i y_i \nabla_{\btheta}\Phi(\btheta; \bx_i) + \lambda_1 y_1 \nabla_{\btheta}\Phi(\btheta; \bx_1) + \lambda_2 y_2 \nabla_{\btheta}\Phi(\btheta; \bx_2) \\
        &\stackrel{*}{=} \sum_{i \neq 1,2}\lambda_i y_i \nabla_{\btheta}\Phi(\btheta; \bx_i) + \lambda_1 y_{1.5} \nabla_{\btheta}\Phi(\btheta; \bx_1) + \lambda_2 y_{1.5} \nabla_{\btheta}\Phi(\btheta; \bx_2) \\
        &\stackrel{**}{=} \sum_{i \neq 1,2}\lambda_i y_i \nabla_{\btheta}\Phi(\btheta; \bx_i) + y_{1.5} \nabla_{\btheta}\Phi(\btheta; \lambda_1 \bx_1 + \lambda_2 \bx_2)\\
        &= \sum_{i \neq 1,2}\lambda_i y_i \nabla_{\btheta}\Phi(\btheta; \bx_i) + y_{1.5} \nabla_{\btheta}\Phi\left(\btheta; (\lambda_1 + \lambda_2)\left(\frac{\lambda_1}{\lambda_1+\lambda_2} \bx_1 + \frac{\lambda_2}{\lambda_1 + \lambda_2} \bx_2\right)\right)\\
        &= \sum_{i \neq 1,2}\lambda_i y_i \nabla_{\btheta}\Phi(\btheta; \bx_i) + y_{1.5} \nabla_{\btheta}\Phi(\btheta; \lambda^\prime(\alpha \bx_1 + (1-\alpha) \bx_2))\\
        &= \sum_{i \neq 1,2}\lambda_i y_i \nabla_{\btheta}\Phi(\btheta; \bx_i) + y_{1.5} \lambda^\prime \nabla_{\btheta}\Phi(\btheta; \bx_{1.5}),
    \end{align*}
    where $*$ is from the assumption that $y_1 = y_2 = y_{1.5}$, and $**$ follows from Lemma~\ref{lma:gradient-is-convex}, implying that
    \begin{align*}
        \|\btheta - \sum_{i \neq 1,2}\lambda_i y_i \nabla_{\btheta}\Phi(\btheta; \bx_i) - y_{1.5} \lambda^\prime \nabla_{\btheta}\Phi(\btheta; \bx_{1.5})\|^2 = \|\btheta - \sum_{i=1}^n \lambda_i y_i \nabla_{\btheta}\Phi(\btheta; \bx_i)\|^2 \leq \varepsilon,
    \end{align*}
    and therefore $S^\prime$ is also an $\varepsilon$-KKT set.
\end{proof}

\begin{proof}[Proof of Lemma~\ref{lma:split_points_bias_almost_kkt}]
    Denote $\lambda_\alpha = \frac{\alpha\lambda_1}{\alpha + \beta},~ \lambda_\beta = \frac{\beta \lambda_1}{\alpha + \beta}$. It is easy to see that $\lambda_\alpha, \lambda_\beta > 0$, which proves that Condition~\ref{eq:almost_positive_lambdas} holds. Moreover, Condition~\ref{eq:delta_kkt} is also easily satisfied for a sufficiently large $\delta$, and Condition~\ref{eq:almost_margin} holds by the assumption that the predictions of $\Phi$ on $\bz_1$ and $\bz_2$ change by less than the margin value $p$. To show Condition~\ref{eq:epsilon_kkt}, observe that since $\Phi(\btheta; \bx)$ is piecewise linear in $\bx$, $\nabla_{\btheta}\Phi(\btheta; \bx)$ is affine in $\bx$ inside each activation pattern, and can be written as $\nabla_{\btheta}\Phi(\btheta; \bx) = \bba \bx + \bc$ for some matrix $\bba$ and vector $\bc$. This implies that
    \begin{enumerate}
        \item
        $\lambda_\beta + \lambda_\alpha = \frac{\lambda_1 \beta}{\alpha + \beta} + \frac{\lambda_1\alpha}{\alpha+\beta} = \lambda_1$,
        \item 
        $\lambda_\beta(\bx_1 + \alpha\bnu) + \lambda_\alpha(\bx_1 - \beta\bnu) = (\lambda_\beta + \lambda_\alpha)\bx_1 + \frac{\lambda_1 \beta}{\alpha+\beta} \alpha\bnu - \frac{\lambda_1\alpha}{\alpha+\beta}\beta\bnu = \lambda_1 \bx_1$,
    \end{enumerate}
    which is turn shows that,
    \begin{align*}
        \lambda_\beta\nabla_{\btheta}\Phi(\btheta; \bz_1) + \lambda_\alpha\nabla_{\btheta}\Phi(\btheta; \bz_2) &= \lambda_\beta(\bba(\bx_1 + \alpha\bnu) + \bc) + \lambda_\alpha(\bba(\bx_1 - \beta\bnu) + \bc) \\
        &= (\lambda_\beta + \lambda_\alpha)\bba\bx_1 + (\lambda_\alpha + \lambda_\beta)\bc \\
        &= \lambda_1(\bba\bx_1 + \bc) \\
        &= \lambda_1\nabla_{\btheta}\Phi(\btheta; \bx_1).
    \end{align*}
    The above implies
    \begin{align*}
        \|\btheta - \sum_{i \neq 1}\lambda_i y_i\nabla_{\btheta}\Phi(\btheta;\bx_i) - \lambda_\beta\nabla_{\btheta}\Phi(\btheta; \bz_1) -\lambda_\alpha\nabla_{\btheta}\Phi(\btheta; \bz_2)\|^2 = \|\btheta - \sum_{i =1}^n\lambda_i y_i\nabla_{\btheta}\Phi(\btheta;\bx_i)\|^2 \leq \varepsilon,
    \end{align*}
    hence, $S^\prime$ is also an $\varepsilon$-KKT set.
\end{proof}

\subsection{Splitting distance lower bounds}\label{sec:almost-kkt-distance}
In this subsection of the appendix, we establish a bound on the distance between the original point and the new points obtained through the splitting procedure when using Lemma~\ref{lma:split_points_bias_almost_kkt} in the almost-KKT setting. The following lemma will be useful to bound the deviations in the predictions of the model that result from splitting a data instance in a direction $\bnu$.

\begin{lemma}\label{lem:dot_product_bound}
    Given an $(\varepsilon,\delta)$-KKT point $\btheta$, suppose that $\bnu$ is a directional vector satisfying $\norm{\bnu}=1$ and $|\langle\bx_i,\bnu\rangle|\le\gamma$ for all $i$, for some $\gamma>0$. Then, we have for any neuron $j$ that
    \[
        | \langle \bw_j, \bnu \rangle| \le \varepsilon + \gamma |v_j| \sum_{i=1}^n \lambda_i.
    \]
\end{lemma}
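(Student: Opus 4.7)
The plan is to adapt the exact--KKT derivative identities from Lemma~\ref{lem:kkt_derivatives} to the approximate setting by letting the stationarity residual absorb the error. Concretely, I would define $\br \coloneqq \btheta - \sum_{i=1}^n \lambda_i y_i \nabla_\btheta \Phi(\btheta;\bx_i) \in \real^k$, so that by Condition~\ref{eq:epsilon_kkt} we have $\|\br\|_2 \le \varepsilon$. The subvector of $\br$ corresponding to the coordinates of $\bw_j$, call it $\br_{\bw_j}$, then satisfies $\|\br_{\bw_j}\| \le \|\br\| \le \varepsilon$, and expanding the gradient with respect to $\bw_j$ as was done in the proof of Lemma~\ref{lem:kkt_derivatives} yields the identity
\[
\bw_j \;=\; v_j \sum_{i=1}^n \lambda_i y_i\, \sigma'_{i,j}\, \bx_i \;+\; \br_{\bw_j}.
\]

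Next I would take the inner product of this identity with $\bnu$ and apply the triangle inequality, splitting the dot product into a ``data" part and an ``error" part:
\[
|\langle\bw_j,\bnu\rangle| \;\le\; |v_j|\sum_{i=1}^n \lambda_i\, |\sigma'_{i,j}|\,|\langle\bx_i,\bnu\rangle| \;+\; |\langle\br_{\bw_j},\bnu\rangle|.
\]
The first term is bounded using the hypothesis $|\langle\bx_i,\bnu\rangle|\le\gamma$ together with $|\sigma'_{i,j}|\le 1$ (since the subgradient of ReLU lies in $[0,1]$) and $\lambda_i \ge 0$; it is at most $\gamma|v_j|\sum_i \lambda_i$. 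The second term is bounded by Cauchy--Schwarz: $|\langle\br_{\bw_j},\bnu\rangle|\le \|\br_{\bw_j}\|\cdot\|\bnu\| \le \varepsilon$ since $\|\bnu\|=1$. Summing gives the desired bound.

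I do not expect any serious obstacle here: the only subtlety is being careful that $\br_{\bw_j}$ is the sub-block of the full stationarity residual $\br$ associated to the $\bw_j$ coordinates, so that the $\varepsilon$ norm bound for the full vector transfers to its sub-block. Once that observation is in place, the proof is a single application of the triangle inequality combined with Cauchy--Schwarz, with the data assumption $|\langle\bx_i,\bnu\rangle|\le\gamma$ controlling the ``clean" contribution and the $\varepsilon$-approximate stationarity controlling the residual.
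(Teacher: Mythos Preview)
Your proposal is correct and follows essentially the same approach as the paper: both arguments isolate the $\bw_j$-block of the stationarity residual, bound its inner product with $\bnu$ by $\varepsilon$ via Cauchy--Schwarz and $\|\bnu\|=1$, and control the remaining ``data'' term $v_j\sum_i\lambda_i y_i\sigma'_{i,j}\langle\bx_i,\bnu\rangle$ using $|\langle\bx_i,\bnu\rangle|\le\gamma$, $\sigma'_{i,j}\in[0,1]$, and $\lambda_i\ge 0$. The only cosmetic difference is that you name the residual $\br$ explicitly, whereas the paper performs the same add-and-subtract inline.
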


\begin{proof}
    Compute
    \begin{align*}
        | \langle \bw_j, \bnu \rangle| &= \left | \langle \bw_j, \bnu \rangle - \left \langle v_j \sum_{i=1}^n \lambda_i y_i \sigma^\prime_{i,j} \bx_i, \bnu\right\rangle + \left \langle v_j \sum_{i=1}^n \lambda_i y_i \sigma^\prime_{i,j} \bx_i, \bnu\right\rangle \right | \\
        &\leq \left | \left \langle \bw_j - v_j \sum_{i=1}^n \lambda_i y_i \sigma^\prime_{i,j} \bx_i, \bnu \right \rangle \right | + \left | v_j \sum_{i=1}^n \lambda_i y_i \sigma^\prime_{i,j} \langle \bx_i, \bnu \rangle \right | \\
        &\leq \left \| \bw_j - v_j \sum_{i=1}^n \lambda_i y_i \sigma^\prime_{i,j} \bx_i \right \| \cdot \|\bnu \| + |v_j| \sum_{i=1}^n |\lambda_i y_i \sigma^\prime_{i,j}| \cdot | \langle \bx_i, \bnu \rangle| \\
        &= \left \| \bw_j - v_j \sum_{i=1}^n \lambda_i y_i \sigma^\prime_{i,j} \bx_i \right \| \cdot \|\bnu \| + |v_j| \sum_{i=1}^n \lambda_i \sigma^\prime_{i,j} \cdot | \langle \bx_i, \bnu \rangle| \\
        &\leq \varepsilon + \gamma |v_j| \sum_{i=1}^n \lambda_i.
    \end{align*}
\end{proof}

Equipped with the above lemma, we now turn to prove the theorems.

\begin{proof}[Proof of Theorem~\ref{thm:distance_almost_kkt}]
    The theorem follows from Lemma~\ref{lma:split_points_bias_almost_kkt}, as long as $\bx_l + \alpha_l\bnu$ has the same activation pattern as $\bx_l$. To this end, we need to bound $|\frac{\langle \bw_j, \bx_l \rangle + b_j}{\langle \bw_j, \bnu \rangle}|$, which is the distance between $\bx_l$ and the hyperplane induced by $c_j(\bx)$, for each neuron $c_j(\bx)$.

    Denoting the signed distance between a point $\bx$ and the hyperplane $\langle \bw_j, \bx \rangle + b_j = 0$ by $D_j(\bx) = \frac{\langle \bw_j, \bx \rangle + b_j}{\|\bw_j\|}$, we can rewrite 
    \[
        |\alpha_l| \ge \left|\frac{D_j(\bx_l) \|\bw_j\|}{\langle \bw_j, \bnu \rangle}\right| \geq \frac{|D_j(\bx_l)| \|\bw_j\|}{\varepsilon + \gamma |v_j| \sum_{i=1}^n\lambda_i},
    \]
    where the last inequality follows from Lemma~\ref{lem:dot_product_bound}. Namely, we can take $|\alpha_l|$ as small as 
    \[
        \min_{j \in [n]} \frac{|D_j(\bx_l)| \|\bw_j\|}{\varepsilon + \gamma |v_j| \sum_{i=1}^n\lambda_i}.
    \]
    The same reasoning yields the same bound for $|\beta_l|$.
\end{proof}

\begin{proof}[Proof of Theorem~\ref{thm:delta_almost_kkt}]
First, let us bound $\left| (\Phi(\btheta; \bx_l + \alpha_l \bnu) - \Phi(\btheta; \bx_l))) \right|$ as follows
    \begin{align}
        |\Phi(\btheta; \bx_l) - \Phi(\btheta; \bx_l + \alpha_l \bnu)| &= \left | \sum_{j \in J} v_j \relu{\langle\bw_j, \bx_l \rangle + b_j} -  \sum_{j \in J} v_j \relu{\langle\bw_j, \bx_l + \alpha_l \bnu \rangle + b_j} \right | \nonumber\\
        &=\left| \sum_{j \in J} v_j \left( \relu{\langle\bw_j, \bx_l \rangle + b_j}  - \relu{\langle\bw_j, \bx_l + \alpha_l \bnu \rangle + b_j} \right) \right| \nonumber\\
        &\leq \sum_{j \in J} |v_j| \left| \relu{\langle\bw_j, \bx_l \rangle + b_j}  - \relu{\langle\bw_j, \bx_l + \alpha_l \bnu \rangle + b_j} \right| \nonumber\\
        &\stackrel{*}{\leq} \sum_{j \in J} |v_j| |\langle\bw_j, \alpha_l \bnu \rangle| = \alpha_l \sum_{j \in J} |v_j| |\langle\bw_j, \bnu \rangle| \nonumber\\
        &\stackrel{**}{\leq} \alpha_l \sum_{j \in J} |v_j|\left(\varepsilon + \gamma |v_j| \sum_{i=1}^n \lambda_i \right), \label{eq:bound_network_output}
    \end{align}
where $*$ is by the fact that $\relu{\cdot}$ is $1$-Lipschitz, and $**$ is by Lemma~\ref{lem:dot_product_bound}. If the above quantity does not deviate beyond the value of the margin $p$, then we can guarantee that Condition~\ref{eq:almost_margin} still holds and we can use Lemma~\ref{lma:split_points_bias_almost_kkt}. It now only remains to bound by how much $\delta$ deteriorates as follows
    \begin{align*}
        &\left|\frac{\lambda_l \beta_l}{\alpha_l + \beta_l}(y_l \cdot \Phi(\btheta; \bx_l + \alpha_l \bnu) - p)\right| = \left|\frac{\lambda_l \beta_l}{\alpha_l + \beta_l} (y_l \cdot \Phi(\btheta; \bx_l + \alpha_l \bnu) - y_l \cdot \Phi(\btheta; \bx_l) + y_l \cdot \Phi(\btheta; \bx_l) - p)\right| \\
        &\hskip 2cm= \left|\frac{\lambda_l \beta_l}{\alpha_l + \beta_l} (y_l \Phi(\btheta;\bx_l) - p) + \frac{\lambda_l \beta_l}{\alpha_l + \beta_l} (y_l (\Phi(\btheta; \bx_l + \alpha_l \bnu) - \Phi(\btheta; \bx_l))) \right| \\
        &\hskip 2cm\leq \left | \frac{\lambda_l \beta_l}{\alpha_l + \beta_l} (y_l \Phi(\btheta;\bx_l) - p) \right | + \left | \frac{\lambda_l \beta_l}{\alpha_l + \beta_l} (y_l (\Phi(\btheta; \bx_l + \alpha_l \bnu) - \Phi(\btheta; \bx_l))) \right | \\
        &\hskip 2cm\leq \frac{\beta_l}{\alpha_l + \beta_l} \delta + \frac{\lambda_l \beta_l}{\alpha_l + \beta_l} |y_l| \left| (\Phi(\btheta; \bx_l + \alpha_l \bnu) - \Phi(\btheta; \bx_l))) \right| \\
        &\hskip 2cm= \frac{\beta_l}{\alpha_l + \beta_l} \delta + \frac{\lambda_l \beta_l}{\alpha_l + \beta_l} \left| (\Phi(\btheta; \bx_l + \alpha_l \bnu) - \Phi(\btheta; \bx_l))) \right| \\
        &\hskip 2cm\leq \delta + \lambda_l \alpha_l \sum_{j \in J} |v_j| \left(\varepsilon + \gamma |v_j| \sum_{i=1}^n \lambda_i \right),
    \end{align*}
    where the last inequality is due to \Eqref{eq:bound_network_output} and the fact that $\frac{\beta_l}{\alpha_l + \beta_l}\le1$. The theorem then follows by summing the above bound with an analogous bound obtained for $\beta_l$.
\end{proof}

\section{Finding an almost orthogonal splitting direction using SVD decomposition}\label{app:SVD}

Theorems~\ref{thm:distance_almost_kkt}~and~\ref{thm:delta_almost_kkt} require finding a direction $\bnu$ such that $\langle\bnu,\bx_i\rangle$ for all data instances $\bx_i$ to be effective. Conveniently, such an upper bound can be derived in terms of the smallest singular value of the data matrix. Given a training set $S$, we write it as a column matrix $\bbs = (\bx_1, \dots, \bx_n)$. Then we have the following upper bound on the dot products.

\begin{theorem}\label{lma:svd}
    Let $\bbs = \bbu \bSigma \bbv^\top$ be the SVD decomposition of the training data matrix $\bbs$, where $\bbu \in \real^{d \times d}, \bbv \in \real^{n \times n}$, and $\bSigma \in \real^{d \times n}$ is a diagonal matrix with entries $\sigma_1 \geq \ldots\geq \sigma_d \geq 0 $. Then, there exists a unit vector $\bnu$ such that $|\langle \bx_i, \bnu \rangle| \leq \sigma_d$ for all $i \in [n]$.
\end{theorem}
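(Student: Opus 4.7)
The plan is to take $\bnu$ to be the left singular vector of $\bbs$ corresponding to the smallest singular value $\sigma_d$, that is, the $d$-th column of $\bbu$, which I will denote $\bbu_d$. This choice is motivated by the geometric intuition that the directions along which the data matrix acts most weakly are precisely those associated with the smallest singular values, so projecting the data onto such a direction should yield small inner products on each sample.

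I would proceed as follows. First, verify unit-norm: since $\bbu$ is orthogonal, $\|\bnu\|=\|\bbu_d\|=1$. Next, compute the row vector of inner products $\bnu^\top\bbs$ by substituting the SVD,
\begin{equation*}
\bnu^\top\bbs \;=\; \bbu_d^\top\bbu\bSigma\bbv^\top \;=\; \be_d^\top\bSigma\bbv^\top,
\end{equation*}
using $\bbu_d^\top\bbu=\be_d^\top$ by orthogonality of $\bbu$. Because $\bSigma\in\real^{d\times n}$ is rectangular-diagonal with entries $\sigma_1,\ldots,\sigma_d$, its $d$-th row has the single nonzero entry $\sigma_d$ in position $d$, so $\be_d^\top\bSigma=\sigma_d\,\be_d^\top$ (as a row vector in $\real^n$). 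Substituting back gives $\bnu^\top\bbs=\sigma_d(\bbv_d)^\top$, where $\bbv_d$ denotes the $d$-th column of $\bbv$.

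Finally, I would read off the $i$-th coordinate: $\langle\bx_i,\bnu\rangle=\sigma_d\,(\bbv_d)_i$. Since $\bbv$ is orthogonal, its $d$-th column has unit Euclidean norm, so $|(\bbv_d)_i|\le \|\bbv_d\|=1$, which yields $|\langle\bx_i,\bnu\rangle|\le\sigma_d$ for every $i\in[n]$, as claimed.

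There is no real obstacle here; the argument is a one-line SVD manipulation. The only subtlety worth flagging is the shape convention on $\bSigma$: the statement implicitly assumes $d\le n$ so that $\sigma_d$ is well-defined, and in the degenerate case $d>n$ the same construction still works (choosing any unit vector orthogonal to $\mathrm{span}\{\bx_1,\ldots,\bx_n\}$ and noting that $\sigma_d=0$ in this regime, consistent with Theorem~\ref{thm:subspace}).
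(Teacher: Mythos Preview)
Your proof is correct and follows essentially the same approach as the paper: choose $\bnu=\bbu_d$, use orthogonality of $\bbu$ to reduce $\bnu^\top\bbs$ to $\sigma_d$ times the $d$-th row of $\bbv^\top$, and then bound each entry by $1$ using orthogonality of $\bbv$. The paper carries out the same computation column-by-column rather than for the whole matrix at once, but the argument is identical.
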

\begin{proof}
    Let us chose $\bnu = \bbu_d$, i.e the $d$-th column in $\bbu$. Let us denote $\bbx_i$ to be the $i$-th column. We need to show that $|\langle\bbx_i, \bnu \rangle| \leq \sigma_d$, or equivalently $|\bbx_i^\top \bnu| \leq \sigma_d$. We can write $\bbx_i$ as $\bbx_i = \bbu \bSigma \bbv^\top_i = \sum_{j=1}^d \sigma_j \bbv^\top_{i,j} \bbu_j$ where $\bbv^\top_{i,j}$ is the $j$-th entry of the vector $\bbv_i$. Compute
    \begin{align*}
        |\langle \bx_i, \bnu \rangle| &= \left|\left \langle \sum_{j=1}^d\sigma_i \bbv^\top_{i,j} \bbu_i, \bbu_d \right \rangle \right| = \left| \sum_{j=1}^d\sigma_i \bbv^\top_{i,j} \langle \bbu_i, \bbu_d \rangle \right| \stackrel{*}{=} |\sigma_d \bbv_{i,d}^\top| \stackrel{**}{\leq} \sigma_d,
    \end{align*}
    where $*$ is due to the fact that $\bbu$ is orthogonal and $**$ is due to the fact that $\bbv$ is also orthogonal, and $\sigma_d \geq 0$.
\end{proof}

We remark that the above immediately implies that $\gamma\le\sigma_d$ for $\gamma$ in the statements of Theorems~\ref{thm:distance_almost_kkt}~and~\ref{thm:delta_almost_kkt}. Note that the final inequality may be rather loose, since $\bbv_{d}^\top$ is a unit vector; equality holds if and only if the entry $\bbv_{i,d}^\top$ equals $1$. If, however, the vector $\bbv_{i,d}^\top$ has more equally distributed entries, a tighter upper bound with a magnitude of roughly $\sigma_d/n$ will hold, further amplifying the efficacy of our theorems.

\section{Shifting the training set post-training}\label{app:shifting}

In this appendix, we provide additional details on how we obtained the trained network for our CIFAR experiment in Section~\ref{sec:experiments}. Recall that our goal was to shift the training data by a constant bias and then retrain the architecture used by \citet{haim2022reconstructing} on the shifted data, to obscure the attacker's prior knowledge. Since shifting the training set by a constant bias shifts the gradients of the resulting objective function by the same bias, shifting both the data and the initialization point leads to convergence to the same neural network as obtained by shifting the network trained on the original, unshifted data.

In light of this, rather than shifting the data and retraining the network from scratch, we utilized the network trained by \citet{haim2022reconstructing} and adjusted its biases in the first hidden layer. As an example, consider a hidden neuron with weights $\bw$ and bias $b$. Given an input $\bx$, the pre-activation output of this neuron is $\langle\bw,\bx\rangle+b$. Suppose we wish to shift each coordinate of the input $\bx$ by a vector $\bu$. In this case, the pre-activation output of the shifted input is
\[
    \langle\bw,\bx+\bu\rangle+b=\langle\bw,\bx\rangle+\langle\bw,\bu\rangle+b.
\]
Since $\langle\bw,\bu\rangle$ is constant, by modifying the bias term via the transformation $b\mapsto b-\langle\bw,\bu\rangle$, we recover the behavior of the original network on the shifted input. Applying this transformation to all biases in the first hidden layer for various values of $\bu$, we obtained the networks used in Figure~\ref{fig:move_pixels}.

\end{document}